\documentclass{article}



    \usepackage[preprint]{neurips_2020}
    %

\usepackage{amsmath,amsfonts,bm}









\def\eqref#1{equation~\ref{#1}}









\def\1{\bm{1}}










\DeclareMathAlphabet{\mathsfit}{\encodingdefault}{\sfdefault}{m}{sl}
\SetMathAlphabet{\mathsfit}{bold}{\encodingdefault}{\sfdefault}{bx}{n}













\usepackage{hyperref}
\usepackage{url}
\usepackage{amsmath, amssymb, amsthm}
\usepackage{subcaption, graphicx, caption}
\usepackage{enumitem}
\usepackage{booktabs}
\usepackage{multirow}
\usepackage{wrapfig}
\usepackage{natbib}
\usepackage{gensymb}
\setcitestyle{numbers,square}
\newtheorem{definition}{Definition}
\newtheorem{theorem}{Theorem}
\newtheorem{example}{Example}

\newtheorem{proposition}{Proposition}
\newtheorem{corollary}{Corollary}
\newtheorem{remark}{Remark}

\title{Generalized Activation via Multivariate Projection}


\author{
Jiayun Li~~~~ Yuxiao Cheng~~~~ Yiwen Lu~~~~~ Zhuofan Xia~~~~ Yilin Mo\thanks{Corresponding author.}~~~~ Gao Huang\\
Department of Automation, Tsinghua University \\
\texttt{\{lijiayun22,cyx22,luyw20,xzf23\}@mails.tsinghua.edu.cn} \\
\texttt{\{ylmo, gaohuang\}@tsinghua.edu.cn} \\
}

%

\begin{document}

\maketitle

\begin{abstract}
Activation functions are essential to introduce nonlinearity into neural networks, with the Rectified Linear Unit (ReLU) often favored for its simplicity and effectiveness. Motivated by the structural similarity between a single layer of the Feedforward Neural Network (FNN) and a single iteration of the Projected Gradient Descent (PGD) algorithm, a standard approach for solving constrained optimization problems, we consider ReLU as a projection from $\mathbb{R}$ onto the nonnegative half-line $\mathbb{R}_+$. Building on this interpretation, we extend ReLU by substituting it with a generalized projection operator onto a convex cone, such as the Second-Order Cone (SOC) projection, thereby naturally extending it to a Multivariate Projection Unit (MPU), an activation function with multiple inputs and multiple outputs.
We further provide mathematical proof establishing that FNNs activated by SOC projections outperform those utilizing ReLU in terms of expressive power. Experimental evaluations across three distinct tasks and prevalent architectures further corroborate MPU's superior performance and versatility compared to a wide range of existing activation functions.
\end{abstract}

\section{Introduction}
Activation functions are pivotal in neural networks. They introduce nonlinearity, enable the networks to learn complex functions and, consequently, influence both the expressivity and learnability of the model \citep{ramachandranSearchingActivationFunctions2017, hendrycksGaussianErrorLinear2023}. Notably, many common activation functions employed in deep learning, such as the Rectified Linear Unit (ReLU), the sigmoid function, and the tanh function, are Single-Input Single-Output (SISO) functions, as they map each element of the input tensor independently to a corresponding output. While this structure is proven to be empirically effective, there raises natural questions: How to extend SISO activation functions to Multi-Input Multi-Output (MIMO) ones, and is this extension advantageous?

This paper explores this question based on the relationship between a single layer of the FNN and a single iteration of the Projected Gradient Descent (PGD) algorithm, commonly used for solving optimization problems like Quadratic Programming (QP). Specifically, a single layer of the FNN activated by ReLU can replicate a single iteration of the PGD process for linearly constrained QP problems due to their shared two-step architecture: first, a linear transformation and second, a projection operation, where ReLU is viewed as the projection from $\mathbb{R}$ to the nonnegative half line $\mathbb{R}_+$. However, we theoretically prove that any shallow FNN utilizing ReLU cannot faithfully represent a single iteration of PGD for more generalized cone programming problems, where the problem is rooted in the fact that the SISO ReLU function cannot represent the MIMO cone projection in the PGD iteration.

This observation motivates us to extend the SISO ReLU function to MIMO activation functions for a potential increase in expressive capability. We focus on the projection into the convex cone operator, named as Multivariate Projection Unit (MPU), which is the source of nonlinearity in PGD iterations. The superior expressiveness of the MPU over ReLU is both proved theoretically and validated empirically through experiments on multidimensional function fitting, illustrating the necessity for extension from SISO to MIMO activation functions. Additional experiments are conducted to compare the proposed MPU with existing activation functions on popular architectures, such as convolutional neural networks and transformers. The results demonstrate that the proposed MPU outperforms ReLU, achieving testing accuracy on par with or surpassing other existing activation functions.

Furthermore, PGD can be viewed as a specific instance of the broader proximal gradient descent algorithm, where the projection function serves as a particular type of the proximal operator. Notably, a range of existing activation functions, including ReLU, sigmoid, tanh and softmax, which is a multivariate nonlinear function widely adopted in the transformer architecture, are already recognized as proximal operators~\citep{Combettes2019}. Augmenting this body of work, we establish that the Leaky version of these proximal functions are also proximal operators leveraging the notion of Moreau's envelope. However, we note that an abundance of proximal operators remains untapped as activation functions. This observation points a future direction to the exploration of proximal operators as activation functions. A comprehensive list of both SISO and MIMO proximal operators is available at~\cite{Proximal}.

Moreover, empirical results on three distinct tasks: the function fitting task, the image classification task, and Reinforcement Learning (RL) tasks further corroborate the superior performance and versatility of the proposed MPU. Related works of this paper are reviewed in Appendix~\ref{sec:related_works} due to space constraints. 

In summary, the contributions of this paper are threefold:
\begin{itemize}[leftmargin=*]
    \item Viewing ReLU as a univariate projection function, we propose to generalize the SISO ReLU function to a Multivariate Projection Unit (MPU) by substituting the projection onto $\mathbb{R}^n_+$ with the projection to more complicated shapes such as the second order cones.
    \item Our cone projection function possesses stronger expressive power than the ReLU activation function, which is both theoretically proved and experimentally validated.
    \item By drawing the connection between PGD and the FNN, it can be shown that a significant amount of the nonlinear functions adopted in literature are indeed proximal operators, which brings a future research direction on exploring new nonlinearities based on other proximal operators.
\end{itemize}

\section{Multivariate Activation Function}
\begin{figure*}[t]
    \centering
    \includegraphics[width=\textwidth,trim=1.5in 1.7in 1.in 2.1in,clip]{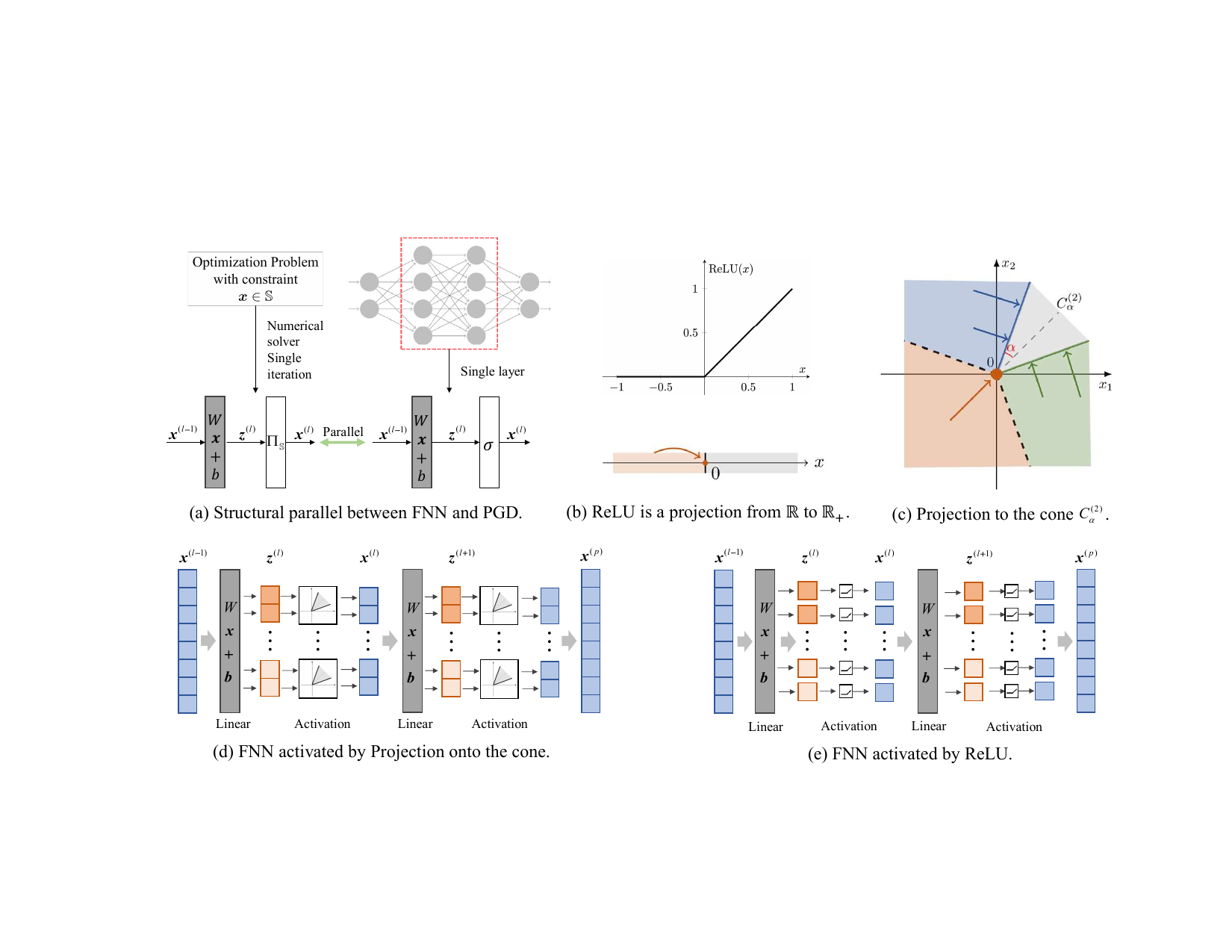}
    \caption{The method proposed in this paper. (a) The structural similarity between a single iteration of the PGD algorithm and a single layer of the shallow FNN; (b) ReLU can be considered as the projection from $\mathbb{R}$ onto the nonnegative half line $\mathbb{R}_+$; (c) Visualization of the projection function from $\mathbb{R}^2$ onto the cone $C_\alpha^{(2)}$ in $\mathbb{R}^2$; (d) The architecture of the shallow FNN with the multivariate activation function; (e) The architecture of the shallow FNN with the ReLU activation function.}
    \label{fig:method}
\end{figure*}

In this section, we explain our rationale for interpreting the ReLU function as a projection mechanism, thereby introducing the MPU as an activation function. Moreover, we theoretically show that the proposed cone-based MPU possess greater expressive power than the single-valued ReLU function.

\subsection{Motivation: Shallow FNN and Projected Gradient Descent}\label{subsec:opt_prob_nn}
This subsection establishes the underlying connection between one iteration of the Projected Gradient Descent (PGD) algorithm and the shallow FNN, thereby considering the ReLU function as a projection operator.

In the context of neural networks, a \textit{neuron} represents a specific mapping defined as follows:
\[\mathrm{Neuron}(\boldsymbol{x})=\sigma(\boldsymbol{w}^\top \boldsymbol{x}+b), \]
where $\boldsymbol{x}\in\mathbb{R}^n$ is the input vector, $\boldsymbol{w}\in\mathbb{R}^n$ is the weight vector, $b\in\mathbb{R}$ denotes the bias term, and $\sigma:\mathbb{R}\to\mathbb{R}$ is the activation function. One of the most commonly employed activation functions in deep neural networks is the ReLU function: 
\(\mathrm{ReLU}(x)=\max(x, 0).\)

\begin{figure*}[t]
    \centering
    \includegraphics[width=\textwidth, trim=1.2in 3.5in 3.4in 0.8in,clip]{./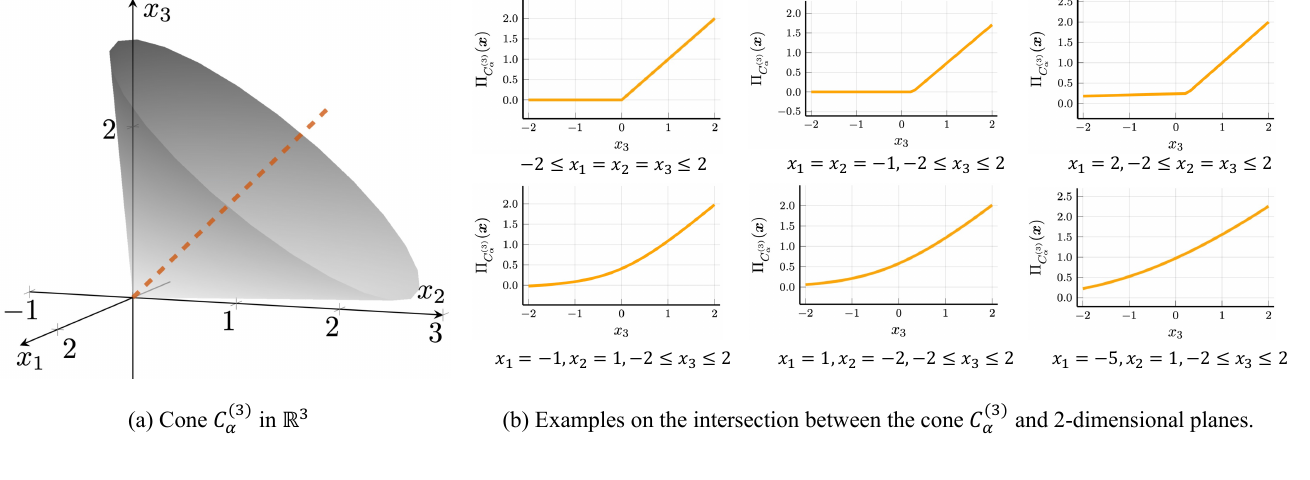}
    \caption{The cone $C_\alpha^{(3)}$ in $\mathbb{R}^3$ and some examples on the intersection between the cone $C_\alpha^{(3)}$ and some 2-dimensional planes.}
    \label{fig:3dim}
\end{figure*}

In modern neural networks like the classic Feedforward Neural Network (FNN), each layer comprises an array of neurons where the activation function operates independently. Formally, for the $l$-th layer of an FNN, given input $\boldsymbol{x}^{(l-1)}\in\mathbb{R}^{(n_{l-1})}$ and output $\boldsymbol{x}^{(l)}\in\mathbb{R}^{n_l}$, the transformation can be described by
\[\boldsymbol{x}^{(l)}=\mathrm{ReLU}(W^{(l)} \boldsymbol{x}^{(l-1)}+\boldsymbol{b}^{(l)}),\]
where the $\mathrm{ReLU}$ function is applied element-wise. Let us define
\(\boldsymbol{z}^{(l)}\triangleq W^{(l)}\boldsymbol{x}^{(l-1)}+\boldsymbol{b}^{(l)}.\)

On the other hand, the \textit{projected gradient descent} method~\citep{proximalbook} is an efficacious approach for numerically solving optimization problems that take the following form:
\begin{equation}\label{eq:projected_problem}
    \begin{aligned}
        \min_{\boldsymbol{x}}\  &\frac{1}{2}\boldsymbol{x}^\top P\boldsymbol{x}+\boldsymbol{q}^\top\boldsymbol{x}, \quad
        \text{s.t.}\ & \boldsymbol{x}\in \mathbb{S},
    \end{aligned}
\end{equation}
where $\boldsymbol{x}, \boldsymbol{q}\in\mathbb{R}^n, P\in\mathbb{R}^{n\times n}\succ 0$ and $\mathbb{S}\subset \mathbb{R}^n$. This canonical form can model many types of optimization problems. For example, choosing the set $\mathbb{S}$ as a polyhedron defined by a set of linear inequality constraints:
\begin{equation}\label{eq:qp_problem_constraint}
    \boldsymbol{H}\boldsymbol{x}\leq \boldsymbol{r},
\end{equation}
where $\boldsymbol{H}\in\mathbb{R}^{n_{in}\times n}, \boldsymbol{r}\in\mathbb{R}^{n_{in}}$, the problem in~\eqref{eq:projected_problem} is the canonical form of the Quadratic Programming (QP) problem, which serves as a cornerstone optimization paradigm with extensive applicability in machine learning and control systems.

\begin{proposition}[Projected Gradient Descent~\citep{proximalbook}]\label{prop:projected_gradient_descent}
    If a proper step size $\gamma>0$ is chosen, such that
    \(\|I-\gamma P\|_2<1,\)
    then the problem in~\eqref{eq:projected_problem} can be solved by repeating the following two steps until convergence:
    \begin{equation}\label{eq:projected_gradient_descent}
        \begin{aligned}
            \boldsymbol{z}^{(l)}&=(I-\gamma P)\boldsymbol{x}^{(l-1)}-\gamma \boldsymbol{q}, \\
            \boldsymbol{x}^{(l)}&=\Pi_{\mathbb{S}}(\boldsymbol{z}^{(l)}),
        \end{aligned}
    \end{equation}
    where $\Pi_{\mathbb{S}}$ is the projection operator from $\mathbb{R}^n$ to the set $\mathbb{S}$.
\end{proposition}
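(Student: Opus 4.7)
The plan is to recognize the PGD iteration as a fixed-point iteration for a contraction map and invoke the Banach fixed-point theorem. Define $f(\boldsymbol{x}) \triangleq \tfrac{1}{2}\boldsymbol{x}^\top P\boldsymbol{x}+\boldsymbol{q}^\top\boldsymbol{x}$, so that $\nabla f(\boldsymbol{x}) = P\boldsymbol{x}+\boldsymbol{q}$, and set
\begin{equation}
T(\boldsymbol{x}) \triangleq \Pi_{\mathbb{S}}\bigl((I-\gamma P)\boldsymbol{x}-\gamma\boldsymbol{q}\bigr) = \Pi_{\mathbb{S}}\bigl(\boldsymbol{x}-\gamma\nabla f(\boldsymbol{x})\bigr).
\end{equation}
The two-step update in~\eqref{eq:projected_gradient_descent} is then exactly $\boldsymbol{x}^{(l)} = T(\boldsymbol{x}^{(l-1)})$, so the proposition reduces to showing that $T$ has a unique fixed point, that this fixed point is the optimum of~\eqref{eq:projected_problem}, and that the iterates converge to it.

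First, I would show that $T$ is a contraction on $(\mathbb{R}^n,\|\cdot\|_2)$. Since $\mathbb{S}$ is closed and convex, the Euclidean projection $\Pi_{\mathbb{S}}$ is nonexpansive, i.e.\ $\|\Pi_{\mathbb{S}}(\boldsymbol{u})-\Pi_{\mathbb{S}}(\boldsymbol{v})\|_2\leq\|\boldsymbol{u}-\boldsymbol{v}\|_2$. Combining this with the definition of the spectral norm yields
\begin{equation}
\|T(\boldsymbol{x})-T(\boldsymbol{y})\|_2 \leq \|(I-\gamma P)(\boldsymbol{x}-\boldsymbol{y})\|_2 \leq \|I-\gamma P\|_2\cdot\|\boldsymbol{x}-\boldsymbol{y}\|_2,
\end{equation}
which is a strict contraction by the standing hypothesis $\|I-\gamma P\|_2<1$. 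The Banach fixed-point theorem then guarantees that $T$ admits a unique fixed point $\boldsymbol{x}^\star$ and that $\boldsymbol{x}^{(l)}\to\boldsymbol{x}^\star$ geometrically from any initialization.

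Second, I would identify $\boldsymbol{x}^\star$ with the unique optimum of~\eqref{eq:projected_problem}. The variational characterization of the projection states that $\boldsymbol{p}=\Pi_{\mathbb{S}}(\boldsymbol{u})$ if and only if $\boldsymbol{p}\in\mathbb{S}$ and $(\boldsymbol{u}-\boldsymbol{p})^\top(\boldsymbol{y}-\boldsymbol{p})\leq 0$ for every $\boldsymbol{y}\in\mathbb{S}$. Applying this to the fixed-point equation $\boldsymbol{x}^\star=\Pi_{\mathbb{S}}(\boldsymbol{x}^\star-\gamma\nabla f(\boldsymbol{x}^\star))$ and using $\gamma>0$ yields $\nabla f(\boldsymbol{x}^\star)^\top(\boldsymbol{y}-\boldsymbol{x}^\star)\geq 0$ for all $\boldsymbol{y}\in\mathbb{S}$, which is precisely the first-order optimality condition for minimizing a convex objective over $\mathbb{S}$. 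Since $P\succ 0$ makes $f$ strictly convex, this condition has a unique solution, namely the global minimum of~\eqref{eq:projected_problem}.

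The main obstacle is establishing the nonexpansiveness of $\Pi_{\mathbb{S}}$ on a closed convex set; however, this is a textbook consequence of the same variational inequality invoked in the third step (adding the two inequalities obtained by applying the characterization at $\boldsymbol{u}$ and $\boldsymbol{v}$ and applying Cauchy--Schwarz). Every other step is routine bookkeeping once the iteration has been recast in fixed-point form. A mild hypothesis gap to acknowledge is that the proposition implicitly requires $\mathbb{S}$ to be closed and convex for $\Pi_{\mathbb{S}}$ to be well-defined and nonexpansive, which is satisfied in all instances relevant to this paper (polyhedra, second-order cones, and general convex cones).
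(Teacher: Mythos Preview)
Your proposal is correct and is essentially the standard textbook argument for projected gradient descent convergence. Note, however, that the paper does not supply its own proof of this proposition: it is stated with a citation to~\cite{proximalbook} and treated as a known background result, with no derivation appearing either in the main text or in the appendices. Your Banach fixed-point argument (nonexpansiveness of $\Pi_{\mathbb{S}}$ composed with the contractive affine map $I-\gamma P$, followed by the variational characterization of the projection to identify the fixed point with the optimizer) is exactly what one would expect to find in the cited reference, and your explicit flag that $\mathbb{S}$ must be closed and convex is a useful clarification that the paper leaves implicit.
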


First, we have the following theorem that shows FNN with ReLU can represent the projected gradient descent for a particular form of the QP problem, and the proof is provided in Appendix~\ref{append:fnn_relu_pgd}.
\begin{theorem}\label{thm:fnn_relu_pgd}
    The PGD for solving the problem in~\eqref{eq:projected_problem} with $\mathbb{S}=\mathbb{R}_+^{n_l}$ can be represented by a single-layer FNN with ReLU activation function, i.e., 
    \begin{equation}
        \boldsymbol{x}^{(1)}=\mathrm{ReLU}(W^{(1)}\boldsymbol{x}^{(0)}+\boldsymbol{b}^{(1)}),
    \end{equation}
    where
    \(W^{(1)}=I-\gamma P, \boldsymbol{b}^{(1)}=-\gamma \boldsymbol{q}.\)
\end{theorem}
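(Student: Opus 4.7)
The plan is to take the two-step PGD iteration from Proposition~\ref{prop:projected_gradient_descent} and show that, in the special case $\mathbb{S}=\mathbb{R}_+^{n_l}$, the projection step collapses exactly to the element-wise ReLU, so that a single PGD iteration is literally a single ReLU-activated affine layer.

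First, I would pin down the projection operator $\Pi_{\mathbb{R}_+^{n_l}}$. Because the nonnegative orthant is a Cartesian product $\mathbb{R}_+ \times \cdots \times \mathbb{R}_+$, the projection is separable: for any $\boldsymbol{z}\in\mathbb{R}^{n_l}$, the $i$-th coordinate of $\Pi_{\mathbb{R}_+^{n_l}}(\boldsymbol{z})$ is the minimizer of $(y-z_i)^2$ over $y\geq 0$, which is $\max(z_i,0)=\mathrm{ReLU}(z_i)$. Hence $\Pi_{\mathbb{R}_+^{n_l}}(\boldsymbol{z}) = \mathrm{ReLU}(\boldsymbol{z})$ with ReLU applied coordinate-wise. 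This is the key identification that links the projection in PGD to the activation in the FNN.

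Next, I would substitute this identification directly into the PGD update \eqref{eq:projected_gradient_descent}. The first step gives $\boldsymbol{z}^{(1)}=(I-\gamma P)\boldsymbol{x}^{(0)}-\gamma\boldsymbol{q}$, which is an affine map of $\boldsymbol{x}^{(0)}$ with weight matrix $W^{(1)}=I-\gamma P$ and bias $\boldsymbol{b}^{(1)}=-\gamma\boldsymbol{q}$. The second step then yields $\boldsymbol{x}^{(1)}=\mathrm{ReLU}(\boldsymbol{z}^{(1)})=\mathrm{ReLU}(W^{(1)}\boldsymbol{x}^{(0)}+\boldsymbol{b}^{(1)})$, which is exactly the single-layer FNN expression in the theorem statement, matching the weight and bias given.

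There is no real obstacle here, since the argument is a direct unpacking of definitions; the only subtle point is justifying the separability of the projection onto the nonnegative orthant, which follows immediately from the product structure of $\mathbb{R}_+^{n_l}$ and the additive separability of the squared Euclidean distance. One should also note, for completeness, that Proposition~\ref{prop:projected_gradient_descent} only guarantees convergence to the QP optimum across many iterations under the step size condition $\|I-\gamma P\|_2<1$, but the theorem here concerns only the representational equivalence of a \emph{single} iteration with a \emph{single} FNN layer, so the step size condition plays no role in the proof itself.
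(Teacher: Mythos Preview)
Your proposal is correct and follows essentially the same two-step argument as the paper: identify the linear PGD step with the affine layer $(W^{(1)},\boldsymbol{b}^{(1)})=(I-\gamma P,-\gamma\boldsymbol{q})$, then observe that $\Pi_{\mathbb{R}_+^{n_l}}(\boldsymbol{z})=\max(\boldsymbol{0},\boldsymbol{z})=\mathrm{ReLU}(\boldsymbol{z})$ element-wise. Your explicit separability justification and the remark that the step size condition is irrelevant to the representational claim are welcome additions but do not change the route.
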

\begin{remark}
    We find that some other algorithms for solving~\eqref{eq:projected_problem} with $\mathbb{S}$ being a polyhedral cone, such as the ADMM (Alternating Direction Method of Multiplier) algorithm~\citep{boyd2011distributed} and the PDHG (Primal-Dual Hybrid Gradient) algorithm~\citep{Chambolle2011}, can also be represented by the shallow FNN with ReLU activation function.
\end{remark}

By this theorem, the overall PGD iteration can be exactly represented by an Recurrent Neural Network (RNN) with ReLU activation function (or equivalently, an FNN where each layer has the same weights and biases).
\begin{corollary}
    The PGD in~\eqref{eq:projected_gradient_descent} with $M$ iterations can be represented by an RNN with ReLU activation function, i.e.,
    \[\boldsymbol{x}^{(l)}=\mathrm{ReLU}(W\boldsymbol{x}^{(l-1)}+\boldsymbol{b}),\]
    where $\boldsymbol{x}^{(l)}$ denotes the hidden state in the $l$-th layer, and
    \(W=I-\gamma P, \boldsymbol{b}=-\gamma\boldsymbol{q}.\)
\end{corollary}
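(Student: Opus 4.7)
The plan is to obtain the corollary as a direct inductive consequence of Theorem~\ref{thm:fnn_relu_pgd}. Since the PGD update in~\eqref{eq:projected_gradient_descent} applies the \emph{same} affine map $(I-\gamma P)\,\cdot -\gamma\boldsymbol{q}$ followed by the \emph{same} projection $\Pi_{\mathbb{R}_+^{n_l}}=\mathrm{ReLU}(\cdot)$ at every iteration, the weights do not depend on the iteration index $l$. This is precisely the weight-sharing structure of an RNN (equivalently, a deep FNN with tied layer parameters), so matching one PGD step to one recurrent step should immediately give the result for $M$ steps.

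Concretely, I would proceed as follows. First, set $W := I-\gamma P$ and $\boldsymbol{b} := -\gamma \boldsymbol{q}$, and define the recurrent hidden state by $\boldsymbol{h}^{(l)} := \mathrm{ReLU}(W \boldsymbol{h}^{(l-1)} + \boldsymbol{b})$ for $l = 1, \dots, M$, with $\boldsymbol{h}^{(0)} := \boldsymbol{x}^{(0)}$. I would then prove by induction on $l$ that $\boldsymbol{h}^{(l)} = \boldsymbol{x}^{(l)}$, where $\boldsymbol{x}^{(l)}$ is the PGD iterate produced by~\eqref{eq:projected_gradient_descent} with $\mathbb{S}=\mathbb{R}_+^{n_l}$. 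The base case $l=1$ is exactly the content of Theorem~\ref{thm:fnn_relu_pgd} applied to the initial state $\boldsymbol{x}^{(0)}$, with $W^{(1)}=W$ and $\boldsymbol{b}^{(1)}=\boldsymbol{b}$. For the inductive step, assuming $\boldsymbol{h}^{(l-1)} = \boldsymbol{x}^{(l-1)}$, I would apply Theorem~\ref{thm:fnn_relu_pgd} again, this time treating $\boldsymbol{x}^{(l-1)}$ as the input of a fresh single-layer FNN with the same $W,\boldsymbol{b}$, which outputs exactly $\boldsymbol{x}^{(l)}$ according to~\eqref{eq:projected_gradient_descent}; hence $\boldsymbol{h}^{(l)} = \boldsymbol{x}^{(l)}$.

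The crucial justification to make explicit is that Theorem~\ref{thm:fnn_relu_pgd} may be reused at every step with \emph{identical} weights: this is legitimate because the PGD target problem~\eqref{eq:projected_problem} is fixed across iterations (the same $P$, $\boldsymbol{q}$, and feasible set $\mathbb{R}_+^{n_l}$), so the prescribed weights $W = I - \gamma P$ and bias $\boldsymbol{b} = -\gamma\boldsymbol{q}$ given by the theorem do not vary with $l$. I would also briefly remark that the step-size condition $\|I-\gamma P\|_2 < 1$ from Proposition~\ref{prop:projected_gradient_descent} carries over unchanged, since the corollary only asserts representability of the iterates, not their convergence.

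There is no genuinely hard step: the entire argument is bookkeeping on top of Theorem~\ref{thm:fnn_relu_pgd}. The only point requiring care is ensuring the notational alignment between the RNN description (tied $W,\boldsymbol{b}$ across time) and the unrolled FNN description used in the theorem, and to note that $\mathrm{ReLU}$ applied componentwise is exactly $\Pi_{\mathbb{R}_+^{n_l}}$, so that the projection step of PGD is reproduced verbatim by the activation. This observation closes the induction and establishes the corollary for arbitrary $M$.
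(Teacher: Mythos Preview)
Your proposal is correct and matches the paper's approach: the paper does not give a standalone proof of the corollary but simply states it as an immediate consequence of Theorem~\ref{thm:fnn_relu_pgd}, noting that the PGD iteration is represented by an RNN (equivalently, an FNN with tied layer parameters). Your inductive write-up is exactly the unpacking of this one-line observation, so there is nothing to add.
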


Remarkably, this theorem aligns the architecture of a single-layer FNN activated by ReLU with a single iteration of PGD, shown in Fig.~\ref{fig:method}(a). This structural connection motivates us to view ReLU as a projection, thereby stimulating explorations into alternative choices for the set $\mathbb{S}$ in optimization problem~\eqref{eq:projected_problem}.

\begin{example}\label{eg:linear_constraint}
Upon setting $\mathbb{S}$ as the polyhedron $\mathbb{P}_{-1, 1}=\{\boldsymbol{x}\in\mathbb{R}^n:\boldsymbol{x}_i\in[-1, 1]\}$, the iteration in~\eqref{eq:projected_gradient_descent} can be represented by an FNN incorporating the hard sigmoid function~\citep{courbariaux2015binaryconnect}.
\end{example}

Note that the particular optimization problems mentioned in Example~\ref{eg:linear_constraint} are all QP problems. In the general case where the set $\mathbb{S}$ is the polyhedron defined by the linear inequality constraints in~\eqref{eq:qp_problem_constraint}, the optimization problem in~\eqref{eq:projected_problem} becomes the canonical form of the QP problem. For such instances, each iteration in PGD also contains a linear step followed by a projection step, where the projection is onto the polyhedron defined by~\eqref{eq:qp_problem_constraint}.


Furthermore, let us define the $n$-dimensional second-order cone as $C^{(n)}$ in $\mathbb{R}^n$, i.e., 
\[C^{(n)}=\{\boldsymbol{x}\in\mathbb{R}^n\mid \|A\boldsymbol{x}+\boldsymbol{d}\|_2\leq \boldsymbol{r}^\top \boldsymbol{x}+d\},\]
where $A\in\mathbb{R}^{n\times n}, \boldsymbol{d}, \boldsymbol{r}\in\mathbb{R}^{n}, l\in\mathbb{R}$. It is well-recognized that any QP problem can be recast as an equivalent Second-Order Cone Programming (SOCP) problem:
\begin{equation}
    \begin{aligned}
        \min_{\boldsymbol{x}}\ \boldsymbol{c}^\top\boldsymbol{x}, \quad
        \text{s.t.}\  \boldsymbol{x}\in C^{(n)},
    \end{aligned}
\end{equation}
but the inverse is not necessarily true. Here, the set $\mathbb{S}$ in~\eqref{eq:projected_problem} is chosen as $C^{(n)}$. Considering the PGD of the SOCP problem, one can observe that it still contains a linear transformation step along with a projection step. However, the projection is now to the cone $C^{(n)}$. 

Each SOCP problem further admits an equivalent Semi-Definite Programming (SDP) formulation. The PGD for the SDP problem retains a similar structural composition to the FNN, except that the nonlinear transformation is now a projection onto the positive semi-definite cone $\mathbb{S}_+^n$.

In summary, let $\boldsymbol{P}_{\mathbb{P}}$ denote the set of optimization problems described by~\eqref{eq:projected_problem}, with the set $\mathbb{S}$ in the constraint chosen as $\mathbb{P}$, the optimization problems mentioned above has the following inclusion relationship:
\[\underbrace{\boldsymbol{P}_{\mathbb{R}^n_+}}_{\text{Problem represented by FNN+ReLU}}\subset \underbrace{\boldsymbol{P}_{H\boldsymbol{x}\leq \boldsymbol{l}}}_{\text{QP}}\subset \underbrace{\boldsymbol{P}_{C^{(n)}}}_{\text{SOCP}}\subset\underbrace{\boldsymbol{P}_{\mathbb{S}^n_+}}_{\text{SDP}}.\]

\subsection{Method}
Previously, we establish in Theorem~\ref{thm:fnn_relu_pgd} that a single-layer FNN with ReLU activation function can represent the PGD algorithm corresponding to a particular form of the QP problem. However, we find that this single-layer FNN activated by ReLU cannot represent the PGD algorithm for the SOCP problem, and the proof is reported in Section~\ref{subsec:expressing_capability} for brevity.

Based on this observation, we propose to extend the typical univariate ReLU activation to more general projection functions, i.e., the Multivariate Projection Unit (MPU).
\begin{definition}[MPU]
    The ($m$-dimensional) Multivariate Projection Unit (MPU) is defined as the nonlinear projection function to the set $\mathbb{S}\subset \mathbb{R}^m$: 
    \[\boldsymbol{P}_{\mathbb{S}}: \mathbb{R}^m\to\mathbb{S}.\]
\end{definition}
In this paper, we discuss a special case by choosing the activation function to be the \textbf{projection onto the second-order cone} in $\mathbb{R}^m$, the projection function corresponding to the SOCP problem. Here, the dimension $m$ can be chosen to be $2, 3, \cdots$ and is not restricted to the width of each layer $n_l$. To this end, the resulting activation function becomes a multi-input multi-output function. 
\begin{definition}[$m$-dimensional second-order cone with half-apex angle $\alpha$]
    We define the $m$-dimensional convex cone $C_\alpha^{(m)}$ in $\mathbb{R}^m$ as the convex cone centered at the origin with the axis $x_1=x_2=\cdots=x_m$, and half-apex angle $\alpha\in(0, \frac{\pi}{2})$, i.e.,
    \[C_\alpha^{(m)}=\{\boldsymbol{x}\in\mathbb{R}^m\mid\|\boldsymbol{h}\|_2\leq \tan(\alpha) t\},\]
    where $t=\frac{1}{\sqrt{m}}\boldsymbol{1}_m^\top \boldsymbol{x}, \boldsymbol{h}=\boldsymbol{x}-\frac{t}{\sqrt{m}}\boldsymbol{1}_m$.
\end{definition}

Specifically, we select $\mathbb{S}$ as $C_\alpha^{(m)}$ for $m=2$ or $m=3$. The cones $C_\alpha^{(2)}$ and $C_\alpha^{(3)}$ are visualized in Fig.~\ref{fig:method}(b) and Fig.~\ref{fig:3dim}(a) respectively. The explicit calculation of the cone $C_m^{(\alpha)}$ is derived in Appendix~\ref{append:cone_projection}. Moreover, the discussion on the different choices of the cones, as well as the analysis on the computational complexity are provided in Appendix~\ref{append:m_choice} and Appendix~\ref{append:complexity}, respectively.

To incorporate the activation function $\Pi_{C_{\alpha}^{(m)}}(\boldsymbol{x})$ into the $l$-th layer of a neural network, the input tensor $\boldsymbol{z}^{(l)}\in\mathbb{R}^{b\times n_{l, 1}\times n_{l, 2}\times\cdots\times n_{l, k_l}}$ is first reshaped into a 2-dimensional vector $\boldsymbol{\tilde z}^{(l)}\in\mathbb{R}^{b\times \tilde n_{l-1}}$, where $b$ represents the batch size and $\tilde n_{l-1}=n_{l-1, 1}n_{l-1, 2}\cdots n_{l-1, k_l}$. Subsequently, the entries $\boldsymbol{\tilde z}^{(l)}{:, p:\lfloor \frac{n_l}{m}\rfloor m+p}, p=1, \cdots, m-1$ are partitioned into $\lfloor \frac{n_l}{m}\rfloor$ sets, each processed by a projection unit $\Pi_{C_\alpha^{(m)}}$. For the residual $n_l\mod m$ dimensions, two options are available: either zero-padding $m-n_l\mod m$ dimensions followed by a projection through $C_\alpha^{(m)}$, or direct passage through the ReLU function. Empirical evidence suggests a marginal superiority of the former approach, which is thus employed in subsequent experiments. Furthermore, the explicit calculation of the projection function $\Pi_{C_\alpha^{(m)}}$ is derived in Appendix~\ref{append:cone_projection}, the implementation details are introduced in Appendix~\ref{append:implementation} and the choice of the cone parameters is also discussed in Appendix~\ref{append:m_choice}.

In the following section, we theoretically discuss the expressive power of the FNN activated by the cone $C_\alpha^{(m)}$ and its relationship with that of the FNN activated by the ReLU function.

\subsection{Expressive Capability of FNN with Cone Activation}\label{subsec:expressing_capability}
In this subsection, we take the activation function of the FNN as the projection function to the second-order cone, and prove that the resulting neural network indeed has stronger representation power than the FNN with ReLU activation functions. The complete proof of the theorem is reported in Appendix~\ref{append:relu_express} for brevity.
\begin{theorem}[Expressive capability for projection to cones and ReLU]\label{thm:representation}
The projection onto the $m$-dimensional cone $C_\alpha^{(m)}$ can represent the one dimensional ReLU function, i.e., 
\[\Pi_{C_\alpha^{(m)}}\left(x \mathbf 1 \right)=\mathrm{ReLU}(x),\]
where $\mathbf 1$ is an $m$-dimensional vector of all ones.
    
On the other hand, $\forall \alpha\in (0, \frac{\pi}{2})$ and $\tan\alpha\neq \sqrt{m-1}$, no shallow FNN with ReLU activation function can faithfully represent the projection to $C_\alpha^{(m)}$. In other words, for any shallow FNN with width $d_1$ and parameters $W^{(1)}\in\mathbb{R}^{d_1\times m}, W^{(2)}\in\mathbb{R}^{m\times d_1}, \boldsymbol{b}^{(1)}\in\mathbb{R}^{d_1}$ and $\boldsymbol{b}^{(2)}\in\mathbb{R}^2$, the following equality cannot be true for all $\boldsymbol{x}\in\mathbb{R}^m$,
    \[W^{(2)}\mathrm{ReLU}(W^{(1)}\boldsymbol{x}+\boldsymbol{b}^{(1)})+\boldsymbol{b}^{(2)}=\Pi_{C_\alpha^{(m)}}(\boldsymbol{x}).\]
\end{theorem}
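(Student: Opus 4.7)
For the first part, take $\boldsymbol{x}=x\mathbf{1}$. Then $t=\tfrac{1}{\sqrt{m}}\mathbf{1}^\top(x\mathbf{1})=x\sqrt{m}$ and $\boldsymbol{h}=x\mathbf{1}-\tfrac{t}{\sqrt{m}}\mathbf{1}=\mathbf{0}$, so the cone-membership inequality $\|\boldsymbol{h}\|\le\tan(\alpha)t$ collapses to $x\ge 0$. Thus $x\mathbf{1}$ lies in $C_\alpha^{(m)}$ and projects to itself when $x\ge 0$, and lies on the axis of the polar cone $(C_\alpha^{(m)})^\circ$ (projecting to $\boldsymbol{0}$) when $x<0$. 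Combining both cases gives $\Pi_{C_\alpha^{(m)}}(x\mathbf{1})=\mathrm{ReLU}(x)\,\mathbf{1}$, matching the stated identity modulo the diagonal embedding of the scalar $\mathrm{ReLU}(x)$.

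For the impossibility claim, I would argue by contradiction: suppose $f(\boldsymbol{x})=W^{(2)}\mathrm{ReLU}(W^{(1)}\boldsymbol{x}+\boldsymbol{b}^{(1)})+\boldsymbol{b}^{(2)}$ coincides with $\Pi_{C_\alpha^{(m)}}$ on all of $\mathbb{R}^m$. Cone projection is positively homogeneous of degree one, so dividing $f(\lambda\boldsymbol{x})$ by $\lambda$ and letting $\lambda\to\infty$ kills the bias contributions and forces
\[\Pi_{C_\alpha^{(m)}}(\boldsymbol{x})=W^{(2)}\mathrm{ReLU}(W^{(1)}\boldsymbol{x})\]
on the dense set $\{\boldsymbol{x}:(W^{(1)}\boldsymbol{x})_i\ne 0\;\forall i\}$, and hence everywhere by continuity. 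Consequently the non-smooth locus of $\Pi_{C_\alpha^{(m)}}$---namely $\partial C_\alpha^{(m)}\cup\partial(C_\alpha^{(m)})^\circ$---must lie inside the finite union of linear hyperplanes $\bigcup_i\{\boldsymbol{w}_i^\top\boldsymbol{x}=0\}$.

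The remainder splits on dimension. For $m\ge 3$, $\partial C_\alpha^{(m)}$ is a smooth curved $(m-1)$-dimensional circular-cone surface whose second fundamental form is non-degenerate transverse to the generatrices, so it contains no open subset of any hyperplane---an immediate contradiction. For $m=2$, the non-smooth set is four rays from the origin at angles $\pi/4\pm\alpha$ (boundary of $C_\alpha^{(2)}$) and $3\pi/4+\alpha$, $7\pi/4-\alpha$ (boundary of the polar). The hypothesis $\tan\alpha\ne\sqrt{m-1}=1$ is precisely $\alpha\ne\pi/4$, under which these four rays fail to pair into two full lines; thus every such ray, extended to its containing line $L$, has its opposite half lying strictly inside either $C_\alpha^{(2)}$ or its polar, where $\Pi$ is smooth. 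But for the bias-free network $W^{(2)}\mathrm{ReLU}(W^{(1)}\boldsymbol{x})$, the Jacobian jump across any kink line $L_i=\{\boldsymbol{w}_i^\top\boldsymbol{x}=0\}$ is a position-independent rank-one matrix (a sum of $\pm W^{(2)}_{:,j}\boldsymbol{w}_j^\top$ over units with $\boldsymbol{w}_j\parallel\boldsymbol{w}_i$) and is identical on both halves of $L_i$, which contradicts the mismatch between a nonzero jump on one half and a zero jump on the other. The principal obstacle I anticipate is formalizing this $m=2$ step, specifically ruling out clever cancellations among hidden units sharing a kink line that might appear to produce a position-dependent Jacobian jump along that line.
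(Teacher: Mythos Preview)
Your proof is correct but takes a genuinely different route from the paper's. Both arguments first eliminate the biases by positive homogeneity (the paper via $\limsup_{t\to\infty}\sigma(W^{(1)}t\boldsymbol{x}+\boldsymbol{b}^{(1)})/t=\sigma(W^{(1)}\boldsymbol{x})$, you via dividing $f(\lambda\boldsymbol{x})$ by $\lambda$); the divergence is in the bias-free step. The paper's device is purely algebraic: from $\mathrm{ReLU}(z)-\mathrm{ReLU}(-z)=z$ one obtains $\phi(\boldsymbol{x})-\phi(-\boldsymbol{x})=W^{(2)}W^{(1)}\boldsymbol{x}$, a globally linear map, and then checks by explicit region-by-region computation that $\Pi_{C_\alpha^{(m)}}(\boldsymbol{x})-\Pi_{C_\alpha^{(m)}}(-\boldsymbol{x})$ is not linear. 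Your argument is differential-geometric: for $m\ge3$ the curvature of $\partial C_\alpha^{(m)}$ immediately precludes containment in finitely many hyperplanes; for $m=2$ you compare Jacobian jumps on opposite halves of a kink line. The paper's route is shorter and avoids any smoothness analysis; yours makes the geometric reason for the $m=2$ exclusion $\tan\alpha\ne 1$ transparent---it is exactly the case in which the four boundary rays of $C_\alpha^{(2)}$ and its polar pair into two full lines.

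Regarding the obstacle you anticipate for $m=2$: it is not real. For a bias-free network the Jacobian jump across a kink line $L=\{\boldsymbol{w}^\top\boldsymbol{x}=0\}$ at any $p\in L\setminus\{0\}$ equals $\sum_{j:\boldsymbol{w}_j\parallel\boldsymbol{w}}\mathrm{sgn}(\boldsymbol{w}_j^\top\boldsymbol{w})\,W^{(2)}_{:,j}\boldsymbol{w}_j^\top$, because every unit with $\boldsymbol{w}_j\not\parallel\boldsymbol{w}$ has the same activation status on both sides of $L$ in a neighborhood of $p$ (all kink lines pass through the origin, so no other one separates the two sides locally) and therefore cancels in the difference. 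Hence the jump is constant on all of $L\setminus\{0\}$, and no arrangement of collinear units can manufacture a position-dependent jump; your contradiction goes through as stated.
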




\begin{remark}
	Theorem~\ref{thm:representation} is not contradictory with the well-known Universal Approximation Theorem, which posits that any continuous function defined on a compact set can be approximated to an arbitrarily small error by a shallow FNN equipped with activation functions such as ReLU, while in Theorem~\ref{thm:representation}, we require the {\bf exact} representation on the {\bf entire space}. As a result, it is possible for a shallow FNN equipped with ReLU to approximate an MPU on a compact set to a small error. However, this may result in an explosion of the width of the FNN, which is further validated empirically in Section~\ref{sec:experiment}.
\end{remark}

\begin{remark}
    Theorem~\ref{thm:representation} further shows that a shallow FNN with a hidden dimension $d_1$ and activated by ReLU can be reparametrized to an equivalent shallow FNN, activated by projection onto $C_\alpha^{(m)}, m=2, 3, \cdots$, with hidden dimension not exceeding $md_1$. However, for a fair comparison, we use architectures with the same width for different activation functions in the experiments later. 
\end{remark}


\section{Extension: Design Activation Functions with Proximal Operators}
The Projected Gradient Descent (PGD) algorithm presented in Section~\ref{subsec:opt_prob_nn} serves as a specific instance of the more general \textit{proximal gradient descent} algorithm~\citep{proximalbook}, which is a powerful tool for numerically solving convex optimization problems, especially for those with non-smooth objective functions. Subsequently, we briefly introduce the proximal gradient descent algorithm and demonstrate that the architecture of each solver iteration aligns with that of a single layer of an FNN.
Consider optimization problems taking the following form:
\begin{equation}\label{eq:proximal_problem}
    \min_{\boldsymbol{x}}\ \boldsymbol{x}^\top P\boldsymbol{x}+\boldsymbol{q}^\top \boldsymbol{x}+g(\boldsymbol{x}),
\end{equation}
where $g(\boldsymbol{x}):\mathbb{R}^n\to\mathbb{R}$ is a proper, lower semi-continuous convex function. To numerically solve this problem, we first introduce the notion of \textit{proximal operator}:

\begin{definition}[Proximal operator]
    Let $f:\mathbb{R}^n\to\mathbb{R}$ be a convex function. The proximal operator $\mathrm{Prox}_f$ of $f$ is defined as a mapping from $\mathbb{R}^n$ to $\mathbb{R}^n$:
    \begin{equation}
        \mathrm{Prox}_f(x) = \mathrm{argmin}_{y\in\mathbb{R}^n}\left\{f(y)+\frac{1}{2}\|y-x\|^2\right\}.
    \end{equation}
\end{definition}
For example, the proximal operator of the indicator function:
\begin{equation}
    \mathbb{I}_{\mathbb{S}}(\boldsymbol{x})\triangleq\left\{\begin{aligned}
        \boldsymbol{0}, &\quad \boldsymbol{x}\in\mathbb{S}, \\
        +\infty, &\quad \boldsymbol{x}\notin\mathbb{S},
    \end{aligned}\right.
\end{equation}
is the projection operator to the set $\mathbb{S}$:
\begin{equation}
    \mathrm{Prox}_{\mathbb{I}_{\mathbb{S}}}(\boldsymbol{x})=\Pi_{\mathbb{S}}(\boldsymbol{x}).
\end{equation}
\begin{proposition}[Proximal gradient descent~\citep{proximalbook}]
    If a proper step size $\gamma>0$ is chosen, such that
    \(\|I-\gamma P\|_2<1,\)
    then the problem in~\eqref{eq:proximal_problem} can be numerically solved by repeating the following two steps until convergence:
    \begin{equation}
        \begin{aligned}
            \boldsymbol{z}^{(l)}&=(I-\gamma P)\boldsymbol{x}^{(l-1)}-\gamma\boldsymbol{q}, \\
            \boldsymbol{x}^{(l)}&=\mathrm{Prox}_{g}(\boldsymbol{z}^{(l)}).
        \end{aligned}
    \end{equation}
\end{proposition}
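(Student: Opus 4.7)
The plan is to prove convergence by recognizing the iteration as a Banach fixed-point iteration for a contractive composite operator. Define $T(\boldsymbol{x}) \triangleq \mathrm{Prox}_g((I-\gamma P)\boldsymbol{x}-\gamma\boldsymbol{q})$ so that the stated recursion reads exactly $\boldsymbol{x}^{(l)}=T(\boldsymbol{x}^{(l-1)})$. The proof then reduces to two independent claims: (a) the fixed points of $T$ coincide with the minimizers of~\eqref{eq:proximal_problem}, and (b) $T$ is a strict contraction on $\mathbb{R}^n$ with Lipschitz constant $\|I-\gamma P\|_2<1$. Combining them via the Banach contraction mapping principle then yields both existence and uniqueness of the minimizer and linear convergence of $\{\boldsymbol{x}^{(l)}\}$ to it.

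For claim (a), I would invoke the first-order subdifferential optimality condition for~\eqref{eq:proximal_problem}: letting $f(\boldsymbol{x})=\boldsymbol{x}^\top P\boldsymbol{x}+\boldsymbol{q}^\top\boldsymbol{x}$ so that $\nabla f(\boldsymbol{x})=P\boldsymbol{x}+\boldsymbol{q}$ (absorbing the factor-of-two convention into $P$), optimality for the convex sum $f+g$ reads $\boldsymbol{0}\in P\boldsymbol{x}^\star+\boldsymbol{q}+\partial g(\boldsymbol{x}^\star)$. Scaling by $\gamma$ and rearranging gives $(I-\gamma P)\boldsymbol{x}^\star-\gamma\boldsymbol{q}\in\boldsymbol{x}^\star+\gamma\partial g(\boldsymbol{x}^\star)=(I+\gamma\partial g)(\boldsymbol{x}^\star)$. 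Applying the resolvent of $\partial g$, which for proper lower semi-continuous convex $g$ is exactly $\mathrm{Prox}_{\gamma g}$ (with the $\gamma$ absorbed into the paper's convention for $\mathrm{Prox}_g$), converts this to $\boldsymbol{x}^\star=T(\boldsymbol{x}^\star)$. Each step is an equivalence, so the converse direction is automatic.

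For claim (b), I would decompose $T$ as the composition of the affine map $A(\boldsymbol{x})=(I-\gamma P)\boldsymbol{x}-\gamma\boldsymbol{q}$ with the proximal operator $\mathrm{Prox}_g$. The step-size hypothesis makes $A$ a strict contraction with constant $\|I-\gamma P\|_2<1$, while $\mathrm{Prox}_g$ is firmly nonexpansive---a standard consequence of the monotonicity of $\partial g$ and the variational characterization of the proximal point (equivalently, resolvents of maximally monotone operators are firmly nonexpansive and hence $1$-Lipschitz). Composing the two yields $\|T(\boldsymbol{x})-T(\boldsymbol{y})\|_2 \leq \|I-\gamma P\|_2\,\|\boldsymbol{x}-\boldsymbol{y}\|_2$, completing the contraction argument.

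The hard part is really the clean statement of (a), since one must verify that $\mathrm{Prox}_g$ is single-valued and coincides with the resolvent $(I+\gamma\partial g)^{-1}$; this rests on $g$ being proper, lower semi-continuous, and convex, exactly the standing hypothesis. I would cite classical convex-analysis results (Moreau's theorem on well-definedness of the proximal map, and firm nonexpansiveness of resolvents) rather than re-deriving them. The only delicate point to flag is the notational slippage between $\mathrm{Prox}_g$ and $\mathrm{Prox}_{\gamma g}$ in the iteration, which is absorbed into the paper's step-size convention and does not affect the structure of the argument.
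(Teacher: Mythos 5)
The paper does not actually prove this proposition; it is stated as a known result and deferred to the cited reference on proximal algorithms, so there is no in-paper argument to compare against. Your proof is the standard one for that reference and it is correct: the fixed-point characterization via $0\in\nabla f(\boldsymbol{x}^\star)+\partial g(\boldsymbol{x}^\star)\Leftrightarrow \boldsymbol{x}^\star=\mathrm{Prox}_{\gamma g}\bigl((I-\gamma P)\boldsymbol{x}^\star-\gamma\boldsymbol{q}\bigr)$ is valid because $f$ is everywhere differentiable (so the subdifferential sum rule applies), the hypothesis $\|I-\gamma P\|_2<1$ with $\gamma>0$ forces $P\succ 0$ so a unique minimizer exists, and composing the $\|I-\gamma P\|_2$-Lipschitz affine map with the nonexpansive proximal operator gives a strict contraction, so Banach's theorem delivers linear convergence. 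You are also right to flag the two places where the paper's statement is loose --- the missing factor $\tfrac12$ in the quadratic of~\eqref{eq:proximal_problem} relative to the gradient step $(I-\gamma P)\boldsymbol{x}-\gamma\boldsymbol{q}$, and the writing of $\mathrm{Prox}_{g}$ where the iteration really calls for $\mathrm{Prox}_{\gamma g}$ --- neither of which affects the structure of the argument once the conventions are fixed as you describe.
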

For example, all the problems introduced in Section~\ref{subsec:opt_prob_nn} can be written as the following form:
\begin{equation}
    \begin{aligned}
        \min_{\boldsymbol{x}}\ \boldsymbol{x}^\top P\boldsymbol{x}+\boldsymbol{q}^\top \boldsymbol{x}+\mathbb{I}_{\mathbb{S}}(\boldsymbol{x}),
    \end{aligned}
\end{equation}
and the corresponding proximal gradient descent algorithm reduces to the projected gradient descent introduced in Proposition~\ref{prop:projected_gradient_descent}. Therefore, analogous to the relationship presented in Section~\ref{subsec:opt_prob_nn}, the architecture of the proximal gradient descent algorithm also aligns with that of a layer of FNN, which leads to a natural question of whether certain activation functions can be interpreted as proximal operators. Indeed, this connection between activation functions and proximal operators has been observed in the literature~\citep{Combettes2019}, and besides ReLU, many other activation functions are proximal operators:
\begin{example}
    \begin{itemize}
        \item The sigmoid function is the proximal operator of the following function:
        \begin{equation}
            \left\{\begin{aligned}
                x\log(x)+(1-x)\log(1-x)-\frac{1}{2}x^2, &\quad x\in(0, 1)\\
                0, & \quad x\in\{0, 1\} \\
                +\infty, & \quad \text{otherwise}.
            \end{aligned}\right.
        \end{equation}
        \item The $\mathrm{tanh}$ activation function is the proximal operator of the following function:
        \begin{equation}
            \left\{\begin{aligned}
                x\mathrm{arctanh}(x)+\frac{1}{2}(\ln(1-x^2)-x^2), & \quad |x|<1, \\
                +\infty, &\quad \text{otherwise}.
            \end{aligned}\right.
        \end{equation}
        \item The soft thresholding function~\citep{electronics11142142} is the proximal operator of the vector 1-norm $\|\cdot\|_1$.
        \item The softmax function is the proximal operator of the negative entropy function~\citep{Combettes2019}.
    \end{itemize}
\end{example}
More examples can be referred to~\cite{Combettes2019}, which gives a comprehensive list of activation functions and their corresponding proximal operators.

In addition to these observations, we further provide a framework to represent the Leaky version of an activation function as a proximal operator, utilizing the concept of \textit{Moreau envelope}.
\begin{definition}[Moreau envelope]
    The Moreau envelope of a proper lower semi-continuous convex function $f$ from a Hilbert space $\mathcal{V}$ to $(-\infty, +\infty]$ is defined as
    \begin{equation}
        M_{\lambda f}(x)=\inf_{v\in\mathcal{V}}\left(f(v)+\frac{1}{2\lambda}\|v-x\|^2\right),
    \end{equation}
    where $\lambda\in\mathbb{R}$ is a parameter of the envelope.
\end{definition}
Let us define the \textit{Leaky version} of the activation function $f:\mathbb{R}\to\mathbb{R}$ as
\(\frac{1}{\lambda+1}f(x)+\frac{\lambda}{\lambda+1}x,\)
then we have the following result, where the proof is reported in Appendix~\ref{append:leaky_proximal}.
\begin{theorem}\label{thm:leaky_proximal}
    For a parameter $\lambda$, if an activation function $f:\mathbb{R}\to\mathbb{R}$ can be written as the proximal operator of a function $(\lambda+1)g:\mathbb{R}\to(-\infty, +\infty]$, i.e., $f=\mathrm{Prox}_{(\lambda+1)g},$ then the Leaky version of $f$ is the proximal operator of the Moreau envelope $M_{\lambda g}$.
\end{theorem}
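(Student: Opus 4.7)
The plan is to evaluate $\mathrm{Prox}_{M_{\lambda g}}(x)$ directly by unfolding the infimum hidden inside the Moreau envelope, thereby converting a single proximal problem into a joint minimization over two variables that decouples cleanly. By the definitions of the proximal operator and the Moreau envelope,
\[\mathrm{Prox}_{M_{\lambda g}}(x) \;=\; \arg\min_z\Bigl\{M_{\lambda g}(z) + \tfrac{1}{2}(z-x)^2\Bigr\} \;=\; \arg\min_z\,\inf_v\Bigl\{g(v) + \tfrac{1}{2\lambda}(v-z)^2 + \tfrac{1}{2}(z-x)^2\Bigr\}.\]
Since $g$ is proper, lower semi-continuous and convex, the joint objective in $(v,z)$ is coercive and lower semi-continuous, so the two infima may be swapped and the joint minimizer exists and is unique; the first step is therefore to minimize jointly over $(v,z)$.

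Next I would eliminate $z$ first, holding $v$ fixed. The $z$-dependent part is a strictly convex quadratic, and solving $\partial_z[\,\cdot\,]=0$ yields $z^\star(v) = (v+\lambda x)/(\lambda+1)$. Substituting $z^\star(v)$ back, elementary algebra collapses the two quadratic penalties into a single quadratic in $v-x$ with coefficient $1/[2(\lambda+1)]$, so the remaining outer problem is
\[\arg\min_v\Bigl\{g(v) + \tfrac{1}{2(\lambda+1)}(v-x)^2\Bigr\} \;=\; \arg\min_v\Bigl\{(\lambda+1)g(v) + \tfrac{1}{2}(v-x)^2\Bigr\} \;=\; \mathrm{Prox}_{(\lambda+1)g}(x) \;=\; f(x),\]
where the last equality invokes the hypothesis $f = \mathrm{Prox}_{(\lambda+1)g}$. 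Back-substitution then gives
\[\mathrm{Prox}_{M_{\lambda g}}(x) \;=\; z^\star(f(x)) \;=\; \tfrac{f(x)+\lambda x}{\lambda+1} \;=\; \tfrac{1}{\lambda+1}f(x) + \tfrac{\lambda}{\lambda+1}x,\]
which is exactly the Leaky version of $f$ as defined in the excerpt.

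The hardest step is arguably just recognizing the right decomposition: once the Moreau envelope is unfolded into its defining infimum, everything else is elementary quadratic algebra. The factor $(\lambda+1)$ in the hypothesis is not a free choice — it is forced by the fact that partial minimization of $\frac{1}{2\lambda}(v-z)^2 + \frac{1}{2}(z-x)^2$ over $z$ produces a residual quadratic in $v-x$ with coefficient $\frac{1}{2(\lambda+1)}$, and absorbing this prefactor into $g$ is precisely what makes the reduced subproblem a proximal evaluation of $(\lambda+1)g$. The one technical matter to attend to carefully is justifying the swap of the two infima and the existence and uniqueness of the joint minimizer $(v^\star,z^\star)$, which follow from standard coercivity together with joint convex lower semi-continuity under the stated assumptions on $g$. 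The argument extends verbatim to $\mathbb{R}^n$ and to vector-valued activations, which may be worth flagging in a remark.
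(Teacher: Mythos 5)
Your proof is correct and follows essentially the same route as the paper's own argument: unfold the Moreau envelope into a joint minimization over $(v,z)$, eliminate $z$ first to obtain $z^\star(v)=(v+\lambda x)/(\lambda+1)$, recognize the reduced problem as $\mathrm{Prox}_{(\lambda+1)g}(x)=f(x)$, and back-substitute. The only difference is that you explicitly justify swapping the two infima via coercivity and lower semi-continuity, a point the paper's proof passes over silently.
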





\section{Experiment}\label{sec:experiment}

In this section, we perform experiments to validate the expressive power of our proposed MPU. Specifically, we employ four sets of experiments: (1) a set of experiments on multidimensional function fitting via shallow FNN to demonstrate the expressive power of our proposed MPU, (2) experiments on both CIFAR10~\citep{krizhevsky2009learning} and ImageNet-1k~\citep{deng2009imagenet} to validate the performance of our proposed MPU on ResNet~\citep{he2016deep}, (3) experiments about the vision transformer Deit~\citep{touvron2021training} on ImageNet-1k~\citep{deng2009imagenet} to demonstrate the performance of the proposed MPU, and (4) experiments on the popular RL algorithm PPO~\cite{schulman2017proximal} in Ant environment using IsaacGym~\cite{makoviychuk2021isaac}. We also provide an estimation of computational complexity in terms of MACS, i.e., Multiply-Accumulate Operations. It should be noted that despite our prior theoretical analysis suggesting that a wider FNN activated by cone projection is required for accurate emulation of an FNN with ReLU activation, we retain the original architecture widths across all activation functions in our experiments for a fair comparison. The implementation details of the proposed MPU for all architectures are provided in Appendix~\ref{append:implementation}.

\subsection{Multidimensional Function Fitting via FNN}

Before delving into experiments involving deep architectures, we first validate the expressive power of the proposed MPU on multidimensional function fitting, which serves as a numerical validation of our previous theoretical results. For this purpose, we compare the approximation capabilities of MPU against ReLU, PReLU, Leaky ReLU, top-50\% Winner-Takes-All (WTA), MaxOut, and CReLU in shallow FNNs across two distinct functions, where the first function represents an FNN activated by a ``true'' multidimensional function, while the second serves as an FNN employing a ``pseudo'' multidimensional function for comparison:

\begin{enumerate}
    \item A shallow FNN activated by the projection $\Pi_{C_{\pi/3}^{(2)}}(\boldsymbol{x})$ from $\mathbb{R}^2$ to the cone $C_{\pi/3}^{(2)}$ (visualized in Fig.~\ref{fig:method}(b));
    \item A shallow FNN with the two activation functions, both Leaky ReLUs.
\end{enumerate}

In all experiments, we use the same weight matrix, which is normalized to have a 2-norm of 1. By randomly sampling input vectors $\boldsymbol{x} \in \mathbb{R}^2$ uniformly over the square region $[-10, 10]\times [-10, 10]$, we acquire 40000 samples for training and 10000 samples for testing. The corresponding outputs are then computed by evaluating these inputs through the target functions under approximation. Each model, activated by either univariate functions or MPU, employs a shallow FNN architecture and is trained using Stochastic Gradient Descent (SGD) with a momentum of 0.9 for 50 epochs. Learning rates of $5\times 10^{-4}$ and $0.001$ are applied for the approximation of $C_{\pi/3}^{(2)}$ and the 2-dimensional Leaky ReLU, respectively. These rates are finalized after a grid search over the set ${10^{-4}, 5\times 10^{-4}, 0.001}$. Each experimental setup is executed three times under distinct random seeds $1, 2, 3$. The average loss values across these runs are then computed and illustrated in Fig.~\ref{fig:relu_soc_approx}. Moreover, to accommodate the Single-Input Multiple-Output (SIMO) function CReLU and the Multiple-Input Single-Output (MISO) function Maxout, the structure of the corresponding FNN must be altered. In our experimentation, the input dimension for the FNN utilizing CReLU and the output dimension for the FNN employing Maxout are maintained equal to those of the other FNNs (denoted as the dimension of hidden states in Figure~\ref{fig:relu_soc_approx}). To align with the CReLU activation function, we select the input dimension of the Maxout function to be twice that of the output dimension. As a result, the number of parameters for FNNs incorporating both CReLU and Maxout is 1.5 times that of the other FNNs.

\begin{figure}[t]
    \centering
    \includegraphics[width=0.6\linewidth]{./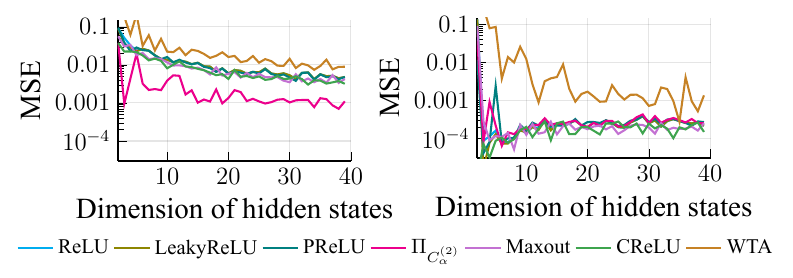}
    \caption{The Mean Squared approximation Error (MSE) of the projection $\Pi_{C_{\pi/3}^{(2)}}(\boldsymbol{x})$ and the 2-dimensional Leaky ReLU function by the FNN activated by univariate functions and the MPU plotted in a log plot w.r.t. different hidden states. \textit{Left}: The approximation error for the FNN with the projection $\Pi_{C_{\pi/3}^{(2)}}(\boldsymbol{x})$. \textit{Right}: The approximation error for the FNN activated by the 2-dimensional Leaky ReLU function.}
    \label{fig:relu_soc_approx}
\end{figure}

The experimental outcomes reveal that FNNs activated by both univariate functions and MPU exhibit satisfactory performance in approximating the ``pseudo'' multidimensional function (the 2-dimensional Leaky ReLU). In contrast, the FNNs utilizing the SISO activation functions display a limitation in accurately approximating the projection operator $\Pi_{C_{\pi/3}^{(2)}}(\boldsymbol{x})$. This experimental result aligns with our prior theoretical results, thereby illustrating the necessity of employing MPU as activation functions.

\subsection{Convolutional Neural Networks Experiments}
We first testify the performance of the proposed activation function on ResNet18~\citep{he2016deep} on CIFAR10~\citep{krizhevsky2009learning}, and the implementation details are provided in Appendix~\ref{append:implementation}. The network is trained for 200 epochs with a batch size of 128, and the learning rate is initialized to 0.1 and decayed in a cosine annealing manner~\citep{loshchilov2016sgdr}. The optimizer is chosen as SGD, setting weight decay to 0.0005 and the momentum to 0.9. These hyperparameters employed are directly inherited, without any modification, from the State-Of-The-Art (SOTA) ResNet training configuration~\citep{pytorch_cifar} optimized for ReLU activation. The results are summarized in Table~\ref{tab:resnet18_cifar10}. It should be noted that the performance of the SIMO CReLU function and the MISO Maxout function is not evaluated within the classic ResNet architecture due to the requisite alternation of the network structure to integrate these functions. Given that convolutional neural networks exhibit significant structural dependencies, it remains unclear whether observed impacts are attributable to the activation functions or to the modifications in the architecture.

\begin{table*}
\centering
\renewcommand{\arraystretch}{1.0}
{\footnotesize
\renewcommand{\arraystretch}{1.0}
\caption{Test accuracy of ResNet18 on CIFAR10 and ImageNet. Test accuracy of ResNet34 and ResNet50 is also provided as a reference. Computational complexity is measured by MACS per epoch, i.e., Multiply-Accumulate Operations, with batch size 128. We highlight the best and the second best in bold and with underlining, respectively.}
\label{tab:resnet18_cifar10}
\begin{tabular}{cc|l|ccc|c}
\toprule
\multicolumn{1}{r}{\multirow{2}{*}{\textbf{Task}}} &\multicolumn{1}{r|}{\multirow{2}{*}{Network}} & \multirow{2}{*}{Activation} & \multicolumn{3}{|c|}{CIFAR10} & \multicolumn{1}{c}{ImageNet} \\
\cline{4-6}\cline{7-7}
\multicolumn{1}{r}{}            &             &                             & Acc. (3 seeds) & Acc. (Max) & MACS  &  Acc.   \\ \midrule
\multirow{9}{*}{\textbf{Vision}}       &\multirow{7}{*}{ResNet18}       &  ReLU                                 & 95.35 $\pm$ 0.10       {\tiny ~~~~~~~~~~~~}   & 95.47  {\tiny ~~~~~~~~~~~~}       & 71.17G & 69.90 {\tiny ~~~~~~~~~~~~}      \\
                 &                                &  Leaky ReLU                           & 95.48 $\pm$ 0.04      {\tiny (+ 0.13)}        & 95.53     {\tiny (+ 0.06)}    & 71.24G  & \textbf{70.32}  {\tiny (+ 0.42)}  \\
                 &               &  PReLU                                & 94.36 $\pm$ 0.18      {\tiny (- 0.99)}        & 94.60     {\tiny (- 0.75)}    & 71.17G  & 68.95   {\tiny (- 0.95)}   \\
                 &               & WTA & 95.36 $\pm$ 0.02 {\tiny ( + 0.01)} & 95.38 {\tiny (- 0.09)}  & 71.24G & 66.74   {\tiny (- 3.16)} \\
                 &               &  \textbf{MPU ($\Pi_{C_{\alpha}^{(2)}}$)}       & \underline{95.51 $\pm$ 0.10} {\tiny (+ 0.16)} & \textbf{95.60}   {\tiny (+ 0.13)} & 71.56G  & 70.00 {\tiny (+ 0.10)}   \\
                 &               &  \textbf{Leaky MPU ($\Pi_{C_{\alpha}^{(2)}}$)} & \textbf{95.53 $\pm$ 0.03}  {\tiny (+ 0.18)}   & \underline{95.56}   {\tiny (+ 0.09)}  & 71.56G   & \underline{70.18} {\tiny (+ 0.28)} \\ 
                 &               &  \textbf{MPU ($\Pi_{C_{\alpha}^{(3)}}$)} & 95.37 $\pm$ 0.21  {\tiny (+ 0.02)}   & 95.53   {\tiny (+ 0.06)} & 71.54G   & 69.64 {\tiny (- 0.26)} \\ 
&ResNet34                        &  ReLU                                 & 95.62 $\pm$ 0.02    {\tiny (+ 0.27)}          & 95.63       {\tiny (+ 0.16)}  & 148.53G   & 73.62 {\tiny (+ 3.72)} \\
&ResNet50                        &  ReLU                                 & 95.42 $\pm$ 0.18    {\tiny (+ 0.07)}          & 95.62       {\tiny (+ 0.15)}  & 166.56G  & 76.55{\tiny (+ 6.65)} \\
\bottomrule
\end{tabular}}
\end{table*}

\begin{table*}[h]
\centering
\small
\renewcommand{\arraystretch}{1.0}
\caption{Reward of the PPO algorithm in the Ant environment. Computational complexity is measured by MACS per epoch with batch size 4096. We highlight the best and the second best in bold and with underlining, respectively.}
\label{tab:ppo_ant}
\begin{tabular}{cc|l|ccc}
\toprule
{\textbf{Task}}                     & {Network}                      & {Activation}                          & Reward (3 seeds)                       & Reward (Max)   & MACS     \\ \midrule
\multirow{7}{*}{\textbf{RL}}       &  \multirow{7}{*}{MLP}          &  ReLU                                 & 5521.15 $\pm$ 504.69       {\tiny ~~~~~~~~~~~~~~~}   & 5899.43  {\tiny ~~~~~~~~~~~~~~~~~}           & 232.78M   \\
            &                    &  Leaky ReLU                           & 5616.58 $\pm$ 445.85      {\tiny (+ 95.43)}        & 6035.25     {\tiny (+ 135.82)}            & 234.62M   \\
                           &                                &  PReLU                                & 5878.36 $\pm$ 519.23      {\tiny (+ 357.21)}        & 6250.98     {\tiny (+ 351.55)}            & 232.78M   \\
                           &                                & WTA & 5358.84 $\pm$ 356.35 {\tiny ( - 162.31)} & 5576.70 {\tiny (- 322.73)} & 234.62M  \\
                           &                                &  \textbf{MPU ($\Pi_{C_{\alpha}^{(2)}}$)}       & 6590.71 $\pm$ 457.50 {\tiny (+ 1069.56)} & 7079.79   {\tiny (+ 1180.36)}     & 242.88M   \\
                           &                                &  \textbf{Leaky MPU ($\Pi_{C_{\alpha}^{(2)}}$)} & \underline{6590.79 $\pm$ 521.43}  {\tiny (+ 1069.64)}   & \underline{7115.19}   {\tiny (+ 1215.76)}     & 246.55M   \\ 
                           &                                &  \textbf{MPU ($\Pi_{C_{\alpha}^{(3)}}$)} & \textbf{6898.66 $\pm$ 685.50}  {\tiny (+ 1377.51)}   & \textbf{7650.75}   {\tiny (+ 1751.32)}     & 241.44M   \\ 
\bottomrule
\end{tabular}
\end{table*}

From the results in Table~\ref{tab:resnet18_cifar10}, we observe that our MPU, along with Leaky MPU, achieves clearly improved performance on CIFAR10, with a maximum increment of 0.18 compared to the original ReLU either in terms of the mean or the maximum test accuracy. Our performance also beats LeakyReLU and PReLU, which are also variants of ReLU, and top-50\% WTA. The computational complexity of our proposed activation function is comparable to that of ReLU and Leaky ReLU. To further validate the increments of MPU, we also conduct experiments on ResNet34 and ResNet50 with ReLU. By comparing ResNet18 with MPU and ResNet34 / ResNet50 with ReLU, we observe that MPU boosts the performance of ResNet18 to a level comparable to that of ResNet34 / ResNet50 but without largely increasing computational complexity. These results demonstrate the effectiveness of our proposed activation function on ResNet18. 

We also benchmark the performance of the proposed activation function on ResNet18~\citep{he2016deep} and ImageNet-1k~\citep{deng2009imagenet}. The network is trained on 8 GPU cards for 100 epochs with a batch size of 32 on each GPU, and the learning rate is initialized to 0.1 and decayed at epochs 30, 60, and 90, respectively, with a ratio of 0.1. The optimizer is chosen as SGD, setting weight decay to 0.0001 and the momentum to 0.9. These hyperparameters employed are directly inherited from the SOTA training configuration~\citep{mmpretrain} for the ResNet architectures utilizing ReLU activation on ImageNet, without any modifications. The results are summarized in Table~\ref{tab:resnet18_cifar10}. We observe that, as a generalization of ReLU, our MPU with $\Pi_{C_{\alpha}^{(2)}}$ achieves better test accuracy on ImageNet than ReLU.

Moreover, we also implement the proposed MPU on the classic vision transformer Deit-tiny~\citep{touvron2021training}. The experimental result and the comparison between the proposed method and ReLU are reported in Appendix~\ref{sec:deit_imagenet} due to space limit.

\subsection{Reinforcement Learning Experiments}\label{subsec:ppo_ant}
We then evaluate the performance of the proposed activation function using the popular PPO algorithm~\cite{schulman2017proximal} in the standard Ant environment in NVIDIA IsaacGym~\cite{makoviychuk2021isaac}, a highly paralleled physical simulator on GPU. The networks in the PPO algorithm are MLPs with 3 hidden layers. The RL algorithm is trained for 500 epochs with 4096 workers. The rest of the implementation details are provided Appendix~\ref{append:implementation}. Both the network structure and the hyperparameters employed are directly inherited from the IsaacGym official repository~\cite{makoviychuk2021isaac} without any tuning. The results are summarized in Table~\ref{tab:ppo_ant}. 

We observe from the result that the proposed MPU, along with the Leaky MPU, demonstrates superior performance relative to other tested activation functions. Most notably, the 3-dimensional MPU excels, achieving the highest performance in both average and maximum rewards. It shows a remarkable increase of 25\% in average reward compared to the traditional ReLU, albeit with a minor increase in computational cost. Further experiments, detailed in Appendix~\ref{sec:deit_imagenet}, involve expanding the width of the MLPs for other activation functions to attain higher MACS than those of the proposed activation functions. These results reveal that even at a higher MACS cost, the compared activation functions fail to achieve rewards comparable to those of our proposed functions. This underscores the efficacy of our proposed activation functions in reinforcement learning tasks.


\section{Conclusion}
The paper extends the SISO activation functions in neural networks by introducing the MPU as a MIMO activation function. This extension is inspired by the structural similarity between a shallow FNN and a single iteration of the PGD algorithm. We provide rigorous theoretical proofs which show that FNNs incorporating the MPU outperform those utilizing ReLU in terms of expressive power. Moreover, by considering activation functions as proximal operators, we prove that their Leaky variants also retain this proximal property, e.g., the Leaky ReLU function. This indicates potential avenues for future research into a broader class of proximal operators, both SISO and MIMO, as activation functions. 

In the experiment section, we conduct empirical validations of our proposed MPU, which include multidimensional function fitting using shallow FNNs, evaluations on CNN architectures with CIFAR10 and ImageNet-1k datasets, and experiments on RL algorithms in the Ant environment. We observe MPU's superior performance on multidimensional function fitting, CIFAR10 image classification and especially the RL tasks. These results demonstrate the MPU's wide applicability across different tasks. However, the promotion of performance on ImageNet-1k is limited, which calls for additional investigation. Our future works include i) empirical applications of MPU on other tasks, e.g., the large language models, the graph neural networks and so on, ii) proper optimizer designed for the MIMO activation functions, iii) optimal nonlinearities in neural networks based on proximal operators.
Our code is available at \url{https://github.com/ljy9912/mimo_nn}.


\bibliography{iclr2024_conference}
\bibliographystyle{iclr2024_conference}

\appendix
\section{Proof of Theorem~\ref{thm:fnn_relu_pgd}}\label{append:fnn_relu_pgd}
\begin{proof}
    First, the linear transformation in~\eqref{eq:projected_gradient_descent} can be achieved by choosing the $W^{(1)}$ and $\boldsymbol{b}^{(1)}$ in the theorem.

    For the second process, we use the following property:
    \[\Pi_{\mathbb{R}_+^{n_l}}(\boldsymbol{z}^{(l)})=\max(\boldsymbol{0}, \boldsymbol{z}^{(l)})=\mathrm{ReLU}(\boldsymbol{z}^{(l)}),\]
    where both the $\max$ operator and the ReLU function are calculated element-wise. Thus, the second process can be represented by the ReLU activation function.
\end{proof}
\section{Projection to $n$-dimensional Cone}\label{append:cone_projection}
\begin{theorem}[Projection to $n$-dimensional cone]
    For any $n$-dimensional cone $C_\alpha^{(n)}\subset\mathbb{R}^n$ with center $\boldsymbol{0}_n$, half apex angle $\alpha\in[0, \frac{\pi}{2}]$ and axis passing through $(1, 1, \cdots, 1)$, the projection $\Pi_{C_\alpha^{(n)}}(\boldsymbol{x})$ can be computed via the following procedure:
    \begin{enumerate}
        \item Compute the height scalar $t\in\mathbb{R}$ and the vector $\boldsymbol{h}\in\mathbb{R}^n$, which is visualized in Fig.~\ref{fig:2dim_proj}:
        \begin{equation}
            t=\frac{1}{\sqrt{n}}\boldsymbol{1}_n^\top x, \quad \boldsymbol{h}=\boldsymbol{x}-\frac{t}{\sqrt{n}}\boldsymbol{1}_n.
        \end{equation}
        \item Compute the projection:
        Let
        \[s=\frac{\tan(\alpha)\|\boldsymbol{h}\|+t}{\tan^2(\alpha)+1},\]
        then
        \begin{equation}
            \Pi_{C_\alpha^{(n)}}(\boldsymbol{x})=\left\{\begin{aligned}
                \boldsymbol{x}, &\quad \|\boldsymbol{h}\|\leq \tan(\alpha)t,\\
                \boldsymbol{0}, &\quad \tan(\alpha)\|\boldsymbol{h}\|\leq -t, \\
                s\left(\frac{\boldsymbol{1}_n}{\sqrt{n}}+\frac{\tan(\alpha)\boldsymbol{h}}{\|\boldsymbol{h}\|}\right), &\quad \text{otherwise}.
            \end{aligned}\right.
        \end{equation}
    \end{enumerate}
\end{theorem}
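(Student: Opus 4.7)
\section*{Proof proposal}

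The plan is to reduce the projection problem to a two dimensional one by exploiting the rotational symmetry of $C_\alpha^{(n)}$ about its axis, and then to dispatch the three cases using elementary convex geometry plus Moreau's decomposition for convex cones.

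First I would fix the orthogonal splitting $\boldsymbol{x} = (t/\sqrt{n})\boldsymbol{1}_n + \boldsymbol{h}$ defined in step~1, where $\boldsymbol{h} \perp \boldsymbol{1}_n$ by construction. Since $C_\alpha^{(n)}$ is invariant under every rotation that fixes its axis (the line spanned by $\boldsymbol{1}_n$), the metric projection is equivariant under these rotations, so $\Pi_{C_\alpha^{(n)}}(\boldsymbol{x})$ must lie in the two dimensional affine plane spanned by $\boldsymbol{1}_n$ and $\boldsymbol{h}$ (or, if $\boldsymbol{h} = \vzero$, on the axis itself, in which case both non trivial branches collapse and the statement is immediate from $t \geq 0$ versus $t < 0$). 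This lets me parametrize any candidate point in that plane as $\boldsymbol{y} = (t_y/\sqrt{n})\boldsymbol{1}_n + s_y \, \boldsymbol{h}/\|\boldsymbol{h}\|$ with $s_y \in \R$, and gives the key identity
\[
\|\boldsymbol{y} - \boldsymbol{x}\|^2 = (t_y - t)^2 + (s_y - \|\boldsymbol{h}\|)^2.
\]

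Next I would settle the two trivial cases. The first case $\|\boldsymbol{h}\| \leq \tan(\alpha) t$ is exactly $\boldsymbol{x} \in C_\alpha^{(n)}$, so $\Pi_{C_\alpha^{(n)}}(\boldsymbol{x}) = \boldsymbol{x}$. For the second case, I would invoke the Moreau decomposition: for the closed convex cone $C_\alpha^{(n)}$, the polar cone is the second order cone with opposite axis and half apex angle $\pi/2 - \alpha$, namely $\{\boldsymbol{x} : \tan(\alpha)\|\boldsymbol{h}\| \leq -t\}$, and $\boldsymbol{x}$ lies in this polar cone iff $\Pi_{C_\alpha^{(n)}}(\boldsymbol{x}) = \boldsymbol{0}$. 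This identifies the second branch of the formula.

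For the third case, $\boldsymbol{x}$ lies neither in $C_\alpha^{(n)}$ nor in its polar, so the projection is a non zero boundary point of $C_\alpha^{(n)}$. Boundary points satisfy $s_y = \tan(\alpha) t_y$ with $t_y > 0$, so minimizing the squared distance reduces to the one dimensional problem
\[
\min_{t_y > 0} \; (t_y - t)^2 + (\tan(\alpha) t_y - \|\boldsymbol{h}\|)^2.
\]
Setting the derivative to zero yields $t_y = (t + \tan(\alpha)\|\boldsymbol{h}\|)/(1 + \tan^2(\alpha)) = s$, and substituting back gives $\boldsymbol{y} = s\bigl(\boldsymbol{1}_n/\sqrt{n} + \tan(\alpha)\boldsymbol{h}/\|\boldsymbol{h}\|\bigr)$, matching the claimed formula.

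The main subtlety I expect is justifying two small points. First, that the minimizer $s$ is positive so the candidate is genuinely on the boundary of the cone rather than below the apex: this follows because the complement of the other two cases gives $\tan(\alpha)\|\boldsymbol{h}\| > -t$, so the numerator of $s$ is strictly positive. Second, that the minimizer of the distance over all of $C_\alpha^{(n)}$ truly lies on the lateral boundary rather than at the apex $\boldsymbol{0}$: this is a short convexity argument, or equivalently it follows from the Moreau decomposition, since we have already excluded $\boldsymbol{x}$ from the polar cone. Everything else is a routine quadratic calculation.
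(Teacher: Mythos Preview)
Your proposal is correct and follows essentially the same route as the paper: decompose $\boldsymbol{x}$ orthogonally along and perpendicular to the axis, reduce by symmetry to the two dimensional plane spanned by $\boldsymbol{1}_n$ and $\boldsymbol{h}$, and treat the three cases separately. The paper is terser---it simply asserts the zero case is ``trivial'' and obtains $s$ by ``simple geometric analysis'' in the plane---whereas you justify the zero case via Moreau's decomposition and the polar cone, and compute $s$ by explicit one variable minimization; these are cleaner justifications of the same steps rather than a different argument.
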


\begin{proof}
    First, we compute the height scalar $t$ and the vector $h$ visualized in Fig.~\ref{fig:2dim_proj}. The expression of $t$ can be directly obtained via definition of the projection from $x$ to the axis line that passes through the point $(1, 1, \cdots, 1)$:
    \[t=\frac{\boldsymbol{1}_n^\top \boldsymbol{x}}{\|\boldsymbol{1}_n\|},\]
    and the vector $h$ can then be computed by subtracting the vector $t$ from $x$:
    \[\boldsymbol{h}=\boldsymbol{x}-t\frac{\boldsymbol{1}_n}{\|\boldsymbol{1}_n\|}.\]
    
    \begin{figure}[!htbp]
        \centering
        \begin{subfigure}{0.48\textwidth}
            \centering
            \includegraphics[width=0.6\linewidth]{./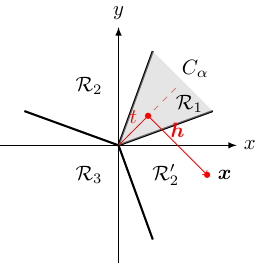}
            \caption{The visualization concerning the meaning of the height scalar $t$ and the vector $\boldsymbol{h}$.}
            \label{fig:2dim_proj}
        \end{subfigure}
        \begin{subfigure}{0.48\textwidth}
            \centering
            \includegraphics[width=0.6\linewidth]{./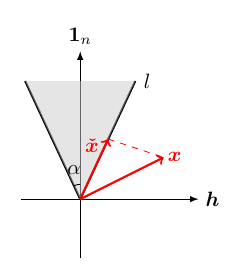}
            \caption{Illustration of the plane spanned by $\boldsymbol{1}_n$ and $\boldsymbol{h}$.}
            \label{fig:2dim_proj_proof}
        \end{subfigure}
    \end{figure}
    
    The case where $\|\boldsymbol{h}\|\leq \tan(\alpha)t$ (the point $\boldsymbol{x}$ falls in the region $\mathcal{R}_1$) is trivial, since its projection is exactly itself.
    
    The case where $\tan(\alpha)\|\boldsymbol{h}\|\leq -t$ (the point $\boldsymbol{x}$ falls in the region $\mathcal{R}_3$) is also trivial, since the projection of $\boldsymbol{x}$ to the cone $C_\alpha^{(n)}$ is exactly $\boldsymbol{0}_n$.
    
    In the following, we focus on case where $\|\boldsymbol{h}\|> \tan(\alpha)t$ and $\tan(\alpha)\|\boldsymbol{h}\|> -t$ (the point $\boldsymbol{x}$ falls into the region $\mathcal{R}_2$ and $\mathcal{R}_2'$), so that the projection of $\boldsymbol{x}$ to the cone $C_\alpha^{(n)}$ is not trivial. Let us consider the plane spanned by $\boldsymbol{1}_n$ and $\boldsymbol{h}$, which is visualized in Fig.~\ref{fig:2dim_proj_proof}. The projection of $\boldsymbol{x}$ to the cone is equivalent to its projection to the boundary of the cone that lies in the plane $l$. Let us denote the projection of $x$ onto the line $l$ as the vector $\check x$. Then, simple geometric analysis shows that the length of the vector $\check x$ is
    \[s\triangleq\|\boldsymbol{\check x}\|=\frac{\tan(\alpha)\|\boldsymbol{h}\|+t}{\tan^2(\alpha)+1}.\]
    Therefore, the coordinate of the point $\boldsymbol{\check x}$ in the original $n$-dimensional space is
    \[\boldsymbol{\check x}=\|\boldsymbol{\check x}\|\left(\frac{\boldsymbol{1}_n}{\sqrt{n}}+\tan(\alpha)\frac{\boldsymbol{h}}{\|\boldsymbol{h}\|}\right)=s\left(\frac{\boldsymbol{1}_n}{\sqrt{n}}+\frac{\tan(\alpha)\boldsymbol{h}}{\|\boldsymbol{h}\|}\right).\]
\end{proof}

\section{Proof of Theorem~\ref{thm:leaky_proximal}}\label{append:leaky_proximal}
\begin{proof}
    Let $\mathcal{V}=\mathbb{R}$, and suppose $f$ is the proximal operator of $g$, then the Moreau envelope
    \[M_{\lambda g}(y)=\min_{v\in\mathbb{R}}\left(g(v)+\frac{1}{2\lambda}(v-y)^2\right),\]
    and the proximal operator of $M_{\lambda g}$ is written as
    \[\mathrm{Prox}_{M_{\lambda g}}(x)=\mathrm{argmin}_{y\in\mathbb{R}}\min_{v\in\mathbb{R}}\left(g(v)+\frac{1}{2\lambda}(v-y)^2+\frac{1}{2}(y-x)^2\right).\]
    Notice that the outer minimization can be solved explicitly as
    \[y^*(x)=\frac{1}{\lambda+1}v^*(x)+\frac{\lambda}{\lambda+1}x,\]
    and the inner minimization is equivalent to
    \[\min_{v\in\mathbb{R}}\ g(v)+\frac{1}{2(\lambda+1)}(v-x)^2.\]
    According to the definition of the proximal operator, the minimizer $v$ is the proximal operator of the function $(\lambda+1)g$ taking value at $x$:
    \[v^*(x)=\mathrm{Prox}_{(\lambda+1)g}(x).\]
    Thus, we have
    \[\mathrm{Prox}_{M_{\lambda g}}(x)=\frac{\lambda}{\lambda+1}x+\frac{1}{\lambda+1}\mathrm{Prox}_{(\lambda+1)g}(x)=\frac{\lambda}{\lambda+1}x+\frac{1}{\lambda+1}f(x).\]
\end{proof}

For example, if we take $\lambda=\frac{1}{99}$, we can see that
\[\mathrm{Prox}_{M_{1/99\mathbb{I}_{\mathbb{R}^+}}}(x)=0.99\mathrm{ReLU}(x)+0.01x,\]
which coincides with the Leaky ReLU.

\section{Proof of Theorem~\ref{thm:representation}}\label{append:relu_express}
\setcounter{theorem}{0}
\begin{theorem}
    The projection to the two dimensional cone $C_\alpha$ can represent one dimensional ReLU function. However, any shallow FNN equipped with ReLU activation function cannot represent the projection to the two dimensional cone $
    C_\alpha$. 
\end{theorem}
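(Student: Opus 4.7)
The plan is to handle the two halves of the statement separately. For the easy direction I would simply plug $\boldsymbol{x} = x\mathbf{1}$ into the piecewise formula of Appendix~\ref{append:cone_projection}: because $\boldsymbol{x}$ lies on the cone's axis, one has $t = \sqrt{m}\,x$ and $\boldsymbol{h} = \boldsymbol{0}$, so the first branch fires when $x \ge 0$ giving $\Pi = \boldsymbol{x}$, the second branch fires when $x < 0$ giving $\Pi = \boldsymbol{0}$, the third branch is never reached, and either way the output equals $\mathrm{ReLU}(x)\mathbf{1}$.

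For the non-representability direction I would argue by contradiction: assume some shallow FNN $f(\boldsymbol{x}) = W^{(2)}\mathrm{ReLU}(W^{(1)}\boldsymbol{x} + \boldsymbol{b}^{(1)}) + \boldsymbol{b}^{(2)}$ coincides with $\Pi_{C_\alpha^{(m)}}$ everywhere. The key preliminary is positive $1$-homogeneity: cone projections commute with positive scaling, so $f$ must be $1$-homogeneous as well. A short argument using $f(\boldsymbol{0}) = \boldsymbol{0}$ together with the observation that $f$ is affine in a small ball around the origin unless some neuron's crease passes through $\boldsymbol{0}$ (while $\Pi$ is manifestly not affine near $\boldsymbol{0}$) reduces the problem to $\boldsymbol{b}^{(1)} = \boldsymbol{0}$, $\boldsymbol{b}^{(2)} = \boldsymbol{0}$. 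The FNN then becomes piecewise linear on a finite fan of polyhedral cones emanating from $\boldsymbol{0}$, with every crease being a full hyperplane through the origin.

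I would then split into two cases. For $m \ge 3$ with $\tan\alpha \ne \sqrt{m-1}$, the cone is a round (non-polyhedral) second-order cone and has a non-empty ``intermediate'' region on which $\Pi(\boldsymbol{x}) = s\bigl(\mathbf{1}/\sqrt{m} + \tan(\alpha)\,\boldsymbol{h}/\|\boldsymbol{h}\|\bigr)$, with $\boldsymbol{h}$ ranging over an $(m{-}1)$-dimensional subspace; the unit direction $\boldsymbol{h}/\|\boldsymbol{h}\|$ then sweeps out a positive-dimensional family, forcing $\Pi$ to be genuinely nonlinear on open subsets of that region, which already contradicts piecewise linearity of $f$. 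For $m = 2$ with $\tan\alpha \ne 1$, $\Pi$ \emph{is} piecewise linear with four sectors separated by four rays through $\boldsymbol{0}$, and $\alpha \ne \pi/4$ is precisely the statement that these rays do not fall into two antipodal pairs. Walking counterclockwise around the origin I would track the Jacobian jump of $f$ across each crease: each neuron whose crease coincides with a given line contributes $\pm w_i^{(2)}(w_i^{(1)})^{\top}$ at one of its two rays and exactly the opposite contribution at the diametrically opposite ray. Matching $\Pi$ forces the net contribution at each projection-boundary ray to be a specific nonzero rank-one matrix (e.g.\ $I - P_{r_1}$ at the ray $r_1$), but then the net contribution at the opposite ray is its negative and therefore also nonzero; however, the opposite ray lies in the interior of one of the four linearity regions of $\Pi$, where no Jacobian jump is allowed. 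Contradiction.

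The hardest step will be the homogeneity reduction, specifically ruling out that constellations of biased neurons conspire to reproduce a homogeneous function without each bias being individually zero. The $m = 2$ Jacobian-jump argument is the geometric heart of the second case and will require careful bookkeeping of the orientation conventions and of the precise rank-one matrices entering each jump. The exclusion $\tan\alpha \ne \sqrt{m-1}$ is crucial at the end, since at that value the cone collapses to the positive orthant $\mathbb{R}_+^{m}$, whose projection is the elementwise ReLU and trivially lies in the shallow-FNN class.
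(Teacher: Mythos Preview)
Your easy direction is identical to the paper's. For the hard direction you take a genuinely different route from the paper, and most of it is correct, so let me first compare and then name the one real gap.

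For the zero-bias case the paper uses a single identity that works uniformly in $m$: from $\mathrm{ReLU}(z)-\mathrm{ReLU}(-z)=z$ one gets $\phi(\boldsymbol x)-\phi(-\boldsymbol x)=W^{(2)}W^{(1)}\boldsymbol x$, a \emph{linear} map of $\boldsymbol x$, and then checks directly from the piecewise formula that $\Pi_{C_\alpha^{(m)}}(\boldsymbol x)-\Pi_{C_\alpha^{(m)}}(-\boldsymbol x)$ is nonlinear. Your split argument is also correct: your $m\ge 3$ observation (the map is not even piecewise linear because of the $\|\boldsymbol h\|$ term) is in fact sharper than what the paper writes for that range, and your $m=2$ Jacobian-jump argument is the geometric cousin of the paper's antisymmetry trick---both encode that ReLU creases come in antipodal pairs, but the paper extracts the contradiction in one line while you track four jumps.

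The gap is your homogeneity reduction. Your sketch (``$f$ is affine near $\boldsymbol 0$ unless some crease passes through $\boldsymbol 0$'') forces only \emph{one} crease through the origin, not all of them, and you rightly flag the ``constellations of biased neurons'' issue as unresolved. The paper sidesteps this entirely and never tries to show the given biases vanish: it simply observes that $\lim_{t\to\infty}\tfrac{1}{t}\,\mathrm{ReLU}\bigl(W^{(1)}(t\boldsymbol x)+\boldsymbol b^{(1)}\bigr)=\mathrm{ReLU}(W^{(1)}\boldsymbol x)$, so if $\phi\equiv\Pi$ and $\Pi$ is $1$-homogeneous, passing to the limit produces a \emph{new} zero-bias network $W^{(2)}\mathrm{ReLU}(W^{(1)}\boldsymbol x)$ that also equals $\Pi$ everywhere. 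This two-line scaling argument replaces what you identified as your hardest step.

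One small correction: $C_\alpha^{(m)}$ coincides with $\mathbb R_+^m$ only when $m=2$ and $\alpha=\pi/4$. For $m\ge 3$ the second-order cone stays round at every $\alpha$, so your own non-piecewise-linearity argument already covers all $\alpha\in(0,\pi/2)$ there; the exclusion $\tan\alpha\ne\sqrt{m-1}$ is only genuinely needed in the planar case.
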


\setcounter{theorem}{4}
\begin{proof}
    We first prove the simple case for the cone $C_\alpha^{(2)}$ in $\mathbb{R}^2$.

    First, we prove that the projection to the two dimensional cone $C_\alpha^{(2)}$ with half apex angle $\alpha\in[0, \frac{\pi}{2}]$ can represent one dimensional ReLU function. Let us consider the projection function restricted to the line $x_1=x_2$, which is exactly the 1-dimensional ReLU, visualized in Fig.~\ref{fig:2dim_cone_1dim_relu}.

    Next, we show that any shallow FNN with arbitrary width cannot represent the 2-dimensional cone $C_\alpha^{(2)}$. Let us consider a shallow FNN with the ReLU activation function, $n_0$ inputs, $n_1$ hidden units and $n_2$ outputs and is written as:
    \begin{equation}
        \phi(x^{(0)})\triangleq x^{(2)}=W^{(2)}\sigma(W^{(1)}x^{(0)}+b_1)+b_2,
    \end{equation}
    where $\sigma=\mathrm{ReLU}$ is calculated element-wise, $W^{(1)}\in\mathbb{R}^{n_1\times 2}, W^{(2)}\in\mathbb{R}^{2\times n_1}, b_1\in\mathbb{R}^{n_1}, b_2\in\mathbb{R}^{2}$. Moreover, we denote the projection function to the 2-dimensional cone $C_\alpha^{(2)}$ as $\Pi_{C_\alpha^{(2)}}:\mathbb{R}^2\to C_\alpha^{(2)}$.

    The proof is decomposed into two steps. First, we consider the case where $b_1=0$, and $x^{(0)}$ belongs to a compact set $\mathbb{D}_r=\{x\in\mathbb{R}^2:\|x\|^2\leq r^2\}$ for an arbitrary $r>0$. Notice that for the ReLU activation function, 
    \begin{equation}
        \sigma(x)-\sigma(-x)=x.
    \end{equation}
    Thus, we have
    \begin{equation}\label{eq:relu_property}
        \phi(x^{(0)})-\phi(-x^{(0)})=W^{(2)}W^{(1)}x^{(0)},
    \end{equation}
    which is linear w.r.t. $x^{(0)}$.
    \begin{figure}[!htbp]
        \centering
        \begin{subfigure}{0.3\textwidth}
            \centering
            \includegraphics[width=\linewidth]{./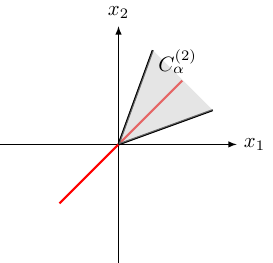}
            \caption{The projection on the line $x_1=x_2$. This shows that the projection to the 2-dimensional cone $C_\alpha^{(2)}$ contains the 1-dimensional ReLU.}
            \label{fig:2dim_cone_1dim_relu}
        \end{subfigure}
        \begin{subfigure}{0.3\textwidth}
            \centering
            \includegraphics[width=\linewidth]{./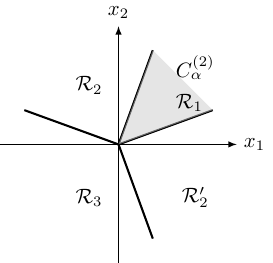}
            \caption{Visualization of the cone $C_\alpha^{(2)}$ where $\alpha\in[0, \frac{\pi}{4})$.}
            \label{fig:2dim_cone_small_alpha}
        \end{subfigure}
        \begin{subfigure}{0.3\textwidth}
            \centering
            \includegraphics[width=\linewidth]{./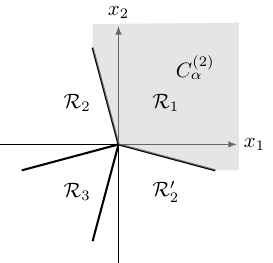}
            \caption{Visualization of the cone $C_\alpha^{(2)}$ where $\alpha\in(\frac{\pi}{4}, \frac{\pi}{2}]$.}
            \label{fig:2dim_cone_large_alpha}
        \end{subfigure}
    \end{figure}

    However, for the projection function $\Pi_{C_\alpha^{(2)}}$, we assert that it is nonlinear w.r.t. $x^{(0)}$. To see this, we shall consider the following two cases:
    \begin{enumerate}
        \item $\alpha\in[0, \pi/4)$, visualized in Fig.~\ref{fig:2dim_cone_small_alpha}.
        
        The position of the point pair $(\boldsymbol{x}, -\boldsymbol{x})$ has four different cases: the two points fall in the region pair $(\mathcal{R}_1, \mathcal{R}_3), (\mathcal{R}_2, \mathcal{R}_3), (\mathcal{R}_2, \mathcal{R}_2')$ or $(\mathcal{R}_2', \mathcal{R}_3)$ respectively. Due to symmetry, we only consider the case where $\boldsymbol{x}$ falls in the upper half plane. Thus,
        \begin{equation}\label{eq:small_alpha_proj}
            \Pi_{C^{(2)}_\alpha}(\boldsymbol{x}^{(0)})-\Pi_{C^{(2)}_\alpha}(-\boldsymbol{x}^{(0)})=\left\{\begin{aligned}
                s\left(\frac{\boldsymbol{1}_n}{\sqrt{n}}+\tan(\alpha)\begin{bmatrix} 1 \\ -1\end{bmatrix}\right), & \quad \boldsymbol{x}\in\mathcal{R}_2', -\boldsymbol{x}\in\mathcal{R}_3, \\
                \boldsymbol{x^{(0)}}, &\quad \boldsymbol{x}\in\mathcal{R}_1, -\boldsymbol{x}\in\mathcal{R}_3, \\
                s\left(\frac{\boldsymbol{1}_n}{\sqrt{n}}+\tan(\alpha)\begin{bmatrix} -1 \\ 1\end{bmatrix}\right), & \quad \boldsymbol{x}\in\mathcal{R}_2, -\boldsymbol{x}\in\mathcal{R}_3, \\
                2s\tan(\alpha)\begin{bmatrix} -1 \\ 1\end{bmatrix}, & \quad \boldsymbol{x}\in\mathcal{R}_2, -\boldsymbol{x}\in\mathcal{R}_2', \\
                -s\left(\frac{\boldsymbol{1}_n}{\sqrt{n}}+\tan(\alpha)\begin{bmatrix} 1 \\ -1\end{bmatrix}\right), & \quad \boldsymbol{x}\in\mathcal{R}_3, -\boldsymbol{x}\in\mathcal{R}_2', \\
            \end{aligned}\right.
        \end{equation}
        with $s=\frac{\tan(\alpha)\|\boldsymbol{h}\|+t}{\tan^2(\alpha)+1}$. Notice that both $\boldsymbol{h}$ and $t$ are linear w.r.t. $\boldsymbol{x}^{(0)}$. Thus, the vector norm $\|\boldsymbol{h}\|$ is nonlinear w.r.t. $\boldsymbol{x}^{(0)}$. Therefore, we can conclude that the function $\Pi_{C^{(2)}_\alpha}(\boldsymbol{x}^{(0)})-\Pi_{C^{(2)}_\alpha}(-\boldsymbol{x}^{(0)})$ is nonlinear w.r.t. the input vector $\boldsymbol{x}^{(0)}$.
        \item $\alpha\in(\frac{\pi}{4}, \frac{\pi}{2}]$, visualized in Fig.~\ref{fig:2dim_cone_large_alpha}.

        The position of the point pair $(\boldsymbol{x}, -\boldsymbol{x})$ has four different cases: the two points fall in the region pair $(\mathcal{R}_1, \mathcal{R}_2), (\mathcal{R}_1, \mathcal{R}_3), (\mathcal{R}_1, \mathcal{R}_2')$ or $(\mathcal{R}_2, \mathcal{R}_2')$ respectively. Due to symmetry, we only consider the case where $\boldsymbol{x}$ falls in the upper half plane. Thus,
        \begin{equation}\label{eq:large_alpha_proj}
            \Pi_{C^{(2)}_\alpha}(\boldsymbol{x}^{(0)})-\Pi_{C^{(2)}_\alpha}(-\boldsymbol{x}^{(0)})=\left\{\begin{aligned}
                \boldsymbol{x}^{(0)}-s\left(\frac{\boldsymbol{1}_n}{\sqrt{n}}+\tan(\alpha)\begin{bmatrix} -1 \\ 1\end{bmatrix}\right), & \quad \boldsymbol{x}\in\mathcal{R}_1, -\boldsymbol{x}\in\mathcal{R}_2, \\
                \boldsymbol{x^{(0)}}, &\quad \boldsymbol{x}\in\mathcal{R}_1, -\boldsymbol{x}\in\mathcal{R}_3, \\
                \boldsymbol{x}^{(0)}-s\left(\frac{\boldsymbol{1}_n}{\sqrt{n}}+\tan(\alpha)\begin{bmatrix} 1 \\ -1\end{bmatrix}\right), & \quad \boldsymbol{x}\in\mathcal{R}_1, -\boldsymbol{x}\in\mathcal{R}_2', \\
                2s\tan(\alpha)\begin{bmatrix} -1 \\ 1\end{bmatrix}, & \quad \boldsymbol{x}\in\mathcal{R}_2, -\boldsymbol{x}\in\mathcal{R}_2', \\
                s\left(\frac{\boldsymbol{1}_n}{\sqrt{n}}+\tan(\alpha)\begin{bmatrix} -1 \\ 1\end{bmatrix}\right)-\boldsymbol{x}^{(0)}, & \quad \boldsymbol{x}\in\mathcal{R}_2, -\boldsymbol{x}\in\mathcal{R}_1, \\
            \end{aligned}\right.
        \end{equation}
        with $s=\frac{\tan(\alpha)\|\boldsymbol{h}\|+t}{\tan^2(\alpha)+1}$. Notice that both $\boldsymbol{h}$ and $t$ are linear w.r.t. $\boldsymbol{x}^{(0)}$. Thus, the vector norm $\|\boldsymbol{h}\|$ is nonlinear w.r.t. $\boldsymbol{x}^{(0)}$. Therefore, we can conclude that the function $\Pi_{C^{(2)}_\alpha}(\boldsymbol{x}^{(0)})-\Pi_{C^{(2)}_\alpha}(-\boldsymbol{x}^{(0)})$ is nonlinear w.r.t. the input vector $\boldsymbol{x}^{(0)}$.
    \end{enumerate}
    In both cases, the resulting function $\Pi_{C^{(2)}_\alpha}(\boldsymbol{x}^{(0)})-\Pi_{C^{(2)}_\alpha}(-\boldsymbol{x}^{(0)})$ is nonlinear w.r.t. the input vector $\boldsymbol{x}^{(0)}$. Thus, we can see that the shallow FNN with ReLU function and bias $b_1=0$ cannot exactly express the projection to the 2-dimensional cone $C_\alpha^{(2)}$ in the compact set $\mathbb{D}_r$ for any arbitrary $r>0$.

    Next, we consider the case where $b_1\neq 0$. In this case, consider the following limit:
    \[\limsup_{t\to\infty}\frac{t\sigma(W^{(1)}\boldsymbol{x}^{(0)}+\boldsymbol{b}^{(1)})}{t}=\sigma(W^{(1)}\boldsymbol{x}^{(0)}),\]
    where the division and the limit are both taken element-wise. Therefore,~\eqref{eq:relu_property} still holds if we replace $\boldsymbol{x}^{(0)}$ with $t\boldsymbol{x}^{(0)}$ for sufficiently large $t$.

    However, one can observe from~\eqref{eq:small_alpha_proj} and~\eqref{eq:large_alpha_proj} that the projection to the 2-dimensional cone $C_{\alpha}^{(2)}$ is nonlinear w.r.t $\boldsymbol{x}^{(0)}$ in both cases. Therefore, we can conclude that the shallow FNN network with ReLU activation function cannot exactly represent the projection function $\Pi_{C_\alpha^{(2)}}$ to the 2-dimensional cone $C_{\alpha}^{(2)}$.

    Next, we prove the general case for the convex cone $C_\alpha^{(m)}$. The proof of the first part in the lemma is exactly the same as that in Theorem~\ref{thm:representation}. Thus, we only focus on the second part. 
    
    First, we still consider the case where the bias of the FNN is zero. Similar to the proof of Theorem~\ref{thm:representation}, it remains to find a region where the projection function $\Pi_{C_\alpha^{(m)}}$ is nonlinear w.r.t. $\boldsymbol{x}^{(0)}$. This statement is easy to verify by considering the expression of the projection function explicitly written in Theorem~\ref{thm:cone_projection}. Moreover, the extension to the case where the bias is nonzero is similar to the proof for the $2$-dimensional case, and is omitted here.
\end{proof}

\section{Implementation Details}\label{append:implementation}

\subsection{Explicit Calculation of the Cone}
The projection to the cone $C_\alpha^{(n)}$ with $n=2, 3, \cdots$, i.e., MPU, can be computed via the following theorem, and its proof is provided in Appendix~\ref{append:cone_projection} for brevity.
\setcounter{theorem}{3}
\begin{theorem}[Projection to $n$-dimensional cone]\label{thm:cone_projection}
    For any $n$-dimensional cone $C_\alpha^{(n)}\subset\mathbb{R}^n$ with center $\boldsymbol{0}_n$, half apex angle $\alpha\in[0, \frac{\pi}{2}]$ and axis passing through $(1, 1, \cdots, 1)$, the projection $\Pi_{C_\alpha^{(n)}}(\boldsymbol{x})$ can be computed via the following procedure:
    \begin{enumerate}
        \item Compute the height scalar $t\in\mathbb{R}$ and the vector $\boldsymbol{h}\in\mathbb{R}^n$, which is visualized in Fig.~\ref{fig:2dim_proj}:
        \begin{equation}
            t=\frac{1}{\sqrt{n}}\boldsymbol{1}_n^\top x, \quad \boldsymbol{h}=\boldsymbol{x}-\frac{t}{\sqrt{n}}\boldsymbol{1}_n.
        \end{equation}
        \item Compute the projection:
        Let
        \[s=\frac{\tan(\alpha)\|\boldsymbol{h}\|+t}{\tan^2(\alpha)+1},\]
        then
        \begin{equation}
            \Pi_{C_\alpha^{(n)}}(\boldsymbol{x})=\left\{\begin{aligned}
                \boldsymbol{x}, &\quad \|\boldsymbol{h}\|\leq \tan(\alpha)t,\\
                \boldsymbol{0}, &\quad \tan(\alpha)\|\boldsymbol{h}\|\leq -t, \\
                s\left(\frac{\boldsymbol{1}_n}{\sqrt{n}}+\frac{\tan(\alpha)\boldsymbol{h}}{\|\boldsymbol{h}\|}\right), &\quad \text{otherwise}.
            \end{aligned}\right.
        \end{equation}
    \end{enumerate}
\end{theorem}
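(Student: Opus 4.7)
My plan is to decompose the input into components parallel and perpendicular to the axis, then analyze each of the three cases separately using the geometric structure of the cone.

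\textbf{Setup.} Let $\boldsymbol{a} \triangleq \boldsymbol{1}_n/\sqrt{n}$ denote the unit axis direction. Writing $t = \boldsymbol{a}^\top \boldsymbol{x}$ and $\boldsymbol{h} = \boldsymbol{x} - t\boldsymbol{a}$, one checks immediately that $\boldsymbol{a}^\top \boldsymbol{h} = 0$, so $t$ is the signed length of $\boldsymbol{x}$ along the axis and $\|\boldsymbol{h}\|$ is the distance from $\boldsymbol{x}$ to the axis. The membership condition for the cone then reads exactly $\|\boldsymbol{h}\| \leq \tan(\alpha)\, t$.

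\textbf{Trivial cases.} When $\|\boldsymbol{h}\| \leq \tan(\alpha)\, t$, the point $\boldsymbol{x}$ already lies in the closed convex cone and so equals its own projection. When $\tan(\alpha)\|\boldsymbol{h}\| \leq -t$, I claim $\boldsymbol{x}$ lies in the polar cone of $C_\alpha^{(n)}$. Indeed, the polar of a rotationally symmetric convex cone with half-apex angle $\alpha$ along $\boldsymbol{a}$ is a rotationally symmetric cone with half-apex angle $\pi/2 - \alpha$ along $-\boldsymbol{a}$; expressed in the $(t, \boldsymbol{h})$ coordinates, membership in this polar cone is exactly $\|\boldsymbol{h}\| \leq \cot(\alpha)(-t)$, i.e. $\tan(\alpha)\|\boldsymbol{h}\| \leq -t$. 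By Moreau's decomposition theorem, any point in the polar cone projects to $\boldsymbol{0}$.

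\textbf{Boundary projection.} The remaining case is the main step. Both the objective $\|\boldsymbol{y} - \boldsymbol{x}\|^2$ and the constraint $\boldsymbol{y} \in C_\alpha^{(n)}$ are invariant under the group of orthogonal transformations fixing the axis direction $\boldsymbol{a}$. The subgroup that further fixes $\boldsymbol{x}$ (equivalently, fixes both $\boldsymbol{a}$ and $\boldsymbol{h}$) acts trivially on the plane $\Sigma$ spanned by $\boldsymbol{a}$ and $\boldsymbol{h}$, and its fixed-point set in $\mathbb{R}^n$ is precisely $\Sigma$. Since the projection onto a closed convex set is unique, it must be fixed by this subgroup, and hence must lie in $\Sigma$. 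Introducing planar coordinates $(\xi, \eta)$ in $\Sigma$ aligned with $\boldsymbol{a}$ and $\boldsymbol{h}/\|\boldsymbol{h}\|$, the input sits at $(t, \|\boldsymbol{h}\|)$ and the cone restricts to the wedge $\{(\xi, \eta) : |\eta| \leq \tan(\alpha)\, \xi\}$. Because the two earlier cases have been excluded, the projection must fall on the upper boundary ray $\{\lambda(1, \tan\alpha) : \lambda \geq 0\}$ (the lower ray is strictly farther since $\|\boldsymbol{h}\| > 0$). Orthogonally projecting $(t, \|\boldsymbol{h}\|)$ onto the line through that ray yields the scalar multiplier $s = (t + \tan(\alpha)\|\boldsymbol{h}\|)/(1 + \tan^2 \alpha)$, and transporting the result back to $\mathbb{R}^n$ gives the stated formula $s\bigl(\boldsymbol{a} + \tan(\alpha)\, \boldsymbol{h}/\|\boldsymbol{h}\|\bigr)$.

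\textbf{Main obstacle.} The delicate parts are identifying the polar cone in the second case and justifying the symmetry reduction to the plane $\Sigma$ in the third. A leaner alternative that sidesteps both is to verify the KKT optimality conditions directly for the candidate $\boldsymbol{\check x}$: that $\boldsymbol{\check x} \in C_\alpha^{(n)}$ (on the boundary), and that $\boldsymbol{x} - \boldsymbol{\check x}$ belongs to the normal cone of $C_\alpha^{(n)}$ at $\boldsymbol{\check x}$. Computing this normal cone at a boundary point again reduces to a two-dimensional computation in the plane spanned by $\boldsymbol{a}$ and $\boldsymbol{h}$, so either route collapses the $n$-dimensional projection to the same essentially planar problem.
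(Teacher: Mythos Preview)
Your proof is correct and follows essentially the same route as the paper: decompose $\boldsymbol{x}$ into its axial coordinate $t$ and radial part $\boldsymbol{h}$, dispose of the two easy cases, and for the remaining case reduce to a two-dimensional projection in the plane spanned by $\boldsymbol{1}_n$ and $\boldsymbol{h}$. The paper's version is terser---it declares the $\boldsymbol{0}$ case ``trivial'' and asserts the planar reduction by a picture---whereas you justify the former via the polar cone and Moreau's decomposition and the latter via the stabilizer-symmetry/uniqueness argument; these are welcome additions of rigor rather than a different strategy.
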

\setcounter{theorem}{4}
Consequently, the Leaky version of the projection to cone $C_\alpha^{(n)}$ with $n=2, 3, \cdots$, i.e., Leaky MPU, can be computed by $0.99\Pi_{C_\alpha^{(n)}}(\boldsymbol{x})+0.01\boldsymbol{x}$.

The further details for the choice of the cone is discussed in Appendix~\ref{append:m_choice}.

\subsection{Experimental Setup}
The multidimensional function fitting experiment is implemented in Python using the \texttt{PyTorch} package~\citep{NEURIPS2019_9015}. The code is self-written and is available in the submitted zip file.

The experiment of ResNet18 on CIFAR10 is mostly based on the code in \url{https://github.com/kuangliu/pytorch-cifar}, which reaches the highest accuracy in all the repositories for ResNet that we investigate. The hyperparameters that we use in our experiment are all directly adopted from the repository without any modifications. The code is available in the submitted zip file.

The experiment of both ResNet18 and the Deit-tiny architectures on ImageNet-1k dataset are based on the code in~\url{https://github.com/open-mmlab/mmpretrain}. The hyperparameters that we use in our experiment are all directly adopted from the repository without any modifications.

Our reinforcement learning (RL) experiments are conducted using NVIDIA IsaacGym~\cite{makoviychuk2021isaac}, employing the IsaacGym repository's default hyperparameters for all experiments. In the Ant environment, we utilized the Proximal Policy Optimization (PPO) algorithm~\cite{schulman2017proximal} for training. The actor in this context is a 4-layer Multilayer Perceptron (MLP), with hidden units of 256, 128, and 64, respectively. We use the Adam optimizer~\cite{kingma2014adam} with a learning rate of $3\times 10^{-4}$ for training. The RL algorithm is trained with 4096 instances running in parallel.

\section{More Experiments}\label{sec:deit_imagenet}
\subsection{Experiments for Vision Tasks}
The proposed activation function is tested on a vision transformer Deit~\citep{touvron2021training} on ImageNet-1k~\citep{deng2009imagenet}. The network is trained on 8 GPU cards for 500 epochs with a batch size of 128 on each GPU, and the learning rate is kept as 0.001 during the first 20 epochs, and then a Cosine Annealing learning rate schedule is employed in the rest of the training phase, and the minimum learning rate is set to $10^{-5}$. We choose the optimizer as AdamW, setting weight decay to 0.05, $\epsilon$ to $10^{-8}$ and betas to $(0.9, 0.999)$. The results are summarized in Table~\ref{tab:deit_imagenet}.

Moreover, to further illustrate the performance of the proposed MPU on a deeper architecture, we test the ReLU, Leaky ReLU and the proposed MPU on the ResNet101 architecture on CIFAR10, and the results are summarized in Table~\ref{tab:resnet101_cifar10}.

Furthermore, as can be seen from Table~\ref{tab:resnet18_cifar10}, the inclusion of the proposed MPU slightly introduces an increase on the MACS. To facilitate a fair comparison of the proposed MPU against ReLU and other activation functions, we conducted additional experiments on the ResNet18 architecture, adjusting the number of training epochs for these functions to ensure their mean Multiply-Accumulate Operations (MACS) exceeded that of the MPU. The results are summarized in Table~\ref{tab:resnet18_cifar10_same_macs}. We also slightly increase the depth of the ResNet18 architecture by adding one more basic block to form ResNet18+, increasing the resulting MACS of the ReLU, Leaky ReLU, PReLU and WTA for fair comparison. The results are shown in Table~\ref{tab:resnet18_cifar10_modified}.
\begin{table}[!htbp]
    \centering
    \caption{Test accuracy of Deit-tiny on ImageNet-1k.}
\begin{tabular}{lll}
    \toprule
Activation Functions       & Top-1 Accuracy      \\
\midrule
ReLU                       & 74.99 \\
Proj to $C_{\alpha}^{(2)}$ & 74.86 \\
Proj to $C_{\alpha}^{(3)}$ & \textbf{75.06} \\
\bottomrule
\end{tabular}
\label{tab:deit_imagenet}
\end{table}
\begin{table}[!htbp]
\vspace{0.1in}
    \caption{Test accuracy of ResNet101 on CIFAR10.}
    \label{tab:resnet101_cifar10}
    \centering
    \small
    \renewcommand{\arraystretch}{1.05}
    \renewcommand\tabcolsep{10pt}
    \begin{tabular}{ccc}
    \toprule
    {Activation} & Test Accuracy (3 seeds) & Test Accuracy (Max)    \\ \midrule
    ReLU         & \underline{95.62 $\pm$ 0.11}{\tiny ~~~~~~~~~~~~} & 95.70   {\tiny ~~~~~~~~~~~~}          \\
    Leaky ReLU   & 95.57 $\pm$ 0.34  {\tiny (- 0.05)}  & \underline{95.88}  {\tiny (+ 0.18)}    \\
    \textbf{MPU ($\Pi_{C_{\alpha}^{(2)}}$)} & 95.08 $\pm$ 0.53 {\tiny (- 0.54)} & 95.69 {\tiny (- 0.01)}   \\
    \textbf{Leaky MPU ($\Pi_{C_{\alpha}^{(2)}}$)} & \textbf{95.78 $\pm$ 0.18 {\tiny (+ 0.16)}} & \textbf{95.91 {\tiny (+ 0.21)}}   \\
    \textbf{MPU ($\Pi_{C_{\alpha}^{(3)}}$)} & 95.60 $\pm$ 0.05 {\tiny (- 0.02)} & 95.64 {\tiny (- 0.06)}   \\ \bottomrule
    \end{tabular}
\end{table}

\begin{table}[h]
\centering
\small
\renewcommand{\arraystretch}{1.05}
\renewcommand\tabcolsep{5pt}

\caption{Test accuracy of ResNet18 on CIFAR10. Test accuracy of ResNet34 and ResNet50 is also provided as a reference. Computational complexity is measured by mean MACS (over 200 epochs) with batch size 128. We highlight the best and the second best in bold and with underlining, respectively.}
\label{tab:resnet18_cifar10_same_macs}
\begin{tabular}{cc|ccc}
\toprule
{Network}                       & {Activation}                          & Test Accuracy (3 seeds)                       & Test Accuracy (Max)   & Mean MACS     \\ \midrule
\multirow{11}{*}{ResNet18}       &  ReLU                                 & 95.35 $\pm$ 0.10       {\tiny ~~~~~~~~~~~~}   & 95.47  {\tiny ~~~~~~~~~~~~}           & 71.17G   \\
                                &  ReLU (+2 epochs)                           & 95.41 $\pm$ 0.12      {\tiny (+ 0.06)}        & 95.51     {\tiny (+ 0.04)}            & 71.88G   \\
                                &  Leaky ReLU                           & 95.48 $\pm$ 0.04      {\tiny (+ 0.13)}        & 95.53     {\tiny (+ 0.06)}            & 71.24G   \\
                                &  Leaky ReLU (+2 epochs)                          & 95.25 $\pm$ 0.09      {\tiny (- 0.10)}        & 95.35     {\tiny (- 0.12)}            & 71.95G   \\
                                &  PReLU                                & 94.36 $\pm$ 0.18      {\tiny (- 0.99)}        & 94.60     {\tiny (- 0.75)}            & 71.17G   \\
                                &  PReLU (+2 epochs)                                & 94.59 $\pm$ 0.01      {\tiny (- 0.76)}        & 94.60     {\tiny (- 0.75)}            & 71.88G   \\
                                & WTA & 95.36 $\pm$ 0.02 {\tiny ( + 0.01)} & 95.38 {\tiny (- 0.09)} & 71.24G \\
                                & WTA (+2 epochs) & 95.23 $\pm$ 0.11 {\tiny ( - 0.12)} & 95.35 {\tiny (- 0.12)} & 71.95G \\
                                &  \textbf{MPU ($\Pi_{C_{\alpha}^{(2)}}$)}       & \underline{95.51 $\pm$ 0.10} {\tiny (+ 0.16)} & \textbf{95.60}   {\tiny (+ 0.13)}     & 71.56G   \\
                                &  \textbf{Leaky MPU ($\Pi_{C_{\alpha}^{(2)}}$)} & \textbf{95.53 $\pm$ 0.03}  {\tiny (+ 0.18)}   & \underline{95.56}   {\tiny (+ 0.09)}     & 71.56G   \\ 
                                &  \textbf{MPU ($\Pi_{C_{\alpha}^{(3)}}$)} & 95.37 $\pm$ 0.21  {\tiny (+ 0.02)}   & 95.53   {\tiny (+ 0.06)}     & 71.54G   \\ 
                                \midrule
ResNet34                        &  ReLU                                 & 95.62 $\pm$ 0.02    {\tiny (+ 0.27)}          & 95.63       {\tiny (+ 0.16)}          & 148.53G  \\
ResNet50                        &  ReLU                                 & 95.42 $\pm$ 0.18    {\tiny (+ 0.07)}          & 95.62       {\tiny (+ 0.15)}          & 166.56G  \\
\bottomrule
\end{tabular}
\end{table}

\begin{table}[h]
\centering
\small
\renewcommand{\arraystretch}{1.05}
\renewcommand\tabcolsep{5pt}

\caption{Test accuracy of ResNet18 and the modified ResNet18+ on CIFAR10. We add one more basic block in ResNet18 structure to form the ResNet18+ architecture. Test accuracy of ResNet34 and ResNet50 is also provided as a reference. Computational complexity is measured by mean MACS (over 200 epochs), with batch size 128. We highlight the best and the second best in bold and with underlining, respectively.}
\label{tab:resnet18_cifar10_modified}
\begin{tabular}{cc|ccc}
\toprule
{Network}                       & {Activation}                          & Test Accuracy (3 seeds)                       & Test Accuracy (Max)   & Mean MACS     \\ \midrule
\multirow{7}{*}{ResNet18}       &  ReLU                                 & 95.35 $\pm$ 0.10       {\tiny ~~~~~~~~~~~~}   & 95.47  {\tiny ~~~~~~~~~~~~}           & 71.17G   \\
                                &  Leaky ReLU                           & 95.48 $\pm$ 0.04      {\tiny (+ 0.13)}        & 95.53     {\tiny (+ 0.06)}            & 71.24G   \\
                                &  PReLU                                & 94.36 $\pm$ 0.18      {\tiny (- 0.99)}        & 94.60     {\tiny (- 0.75)}            & 71.17G   \\
                                & WTA & 95.36 $\pm$ 0.02 {\tiny ( + 0.01)} & 95.38 {\tiny (- 0.09)} & 71.24G \\
                                &  \textbf{MPU ($\Pi_{C_{\alpha}^{(2)}}$)}       & \underline{95.51 $\pm$ 0.10} {\tiny (+ 0.16)} & \textbf{95.60}   {\tiny (+ 0.13)}     & 71.56G   \\
                                &  \textbf{Leaky MPU ($\Pi_{C_{\alpha}^{(2)}}$)} & \textbf{95.53 $\pm$ 0.03}  {\tiny (+ 0.18)}   & \underline{95.56}   {\tiny (+ 0.09)}     & 71.56G   \\ 
                                &  \textbf{MPU ($\Pi_{C_{\alpha}^{(3)}}$)} & 95.37 $\pm$ 0.21  {\tiny (+ 0.02)}   & 95.53   {\tiny (+ 0.06)}     & 71.54G   \\ 
                                \midrule
\multirow{4}{*}{ResNet18+}       &  ReLU                                 & 95.36 $\pm$ 0.23       {\tiny (+ 0.01)}   & 95.63  {\tiny (+ 0.16)}           & 80.84G   \\
                                &  Leaky ReLU                           & 95.26 $\pm$ 0.14      {\tiny (- 0.09)}        & 95.40     {\tiny (- 0.07)}            & 80.91G   \\
                                &  PReLU                                & 94.72 $\pm$ 0.43      {\tiny (- 0.63)}        & 95.10     {\tiny (- 0.37)}            & 80.85G   \\
                                & WTA & 95.25 $\pm$ 0.04 {\tiny (- 0.10)} & 95.30 {\tiny (- 0.17)} & 80.90G \\
                                \midrule
ResNet34                        &  ReLU                                 & 95.62 $\pm$ 0.02    {\tiny (+ 0.27)}          & 95.63       {\tiny (+ 0.16)}          & 148.53G  \\
ResNet50                        &  ReLU                                 & 95.42 $\pm$ 0.18    {\tiny (+ 0.07)}          & 95.62       {\tiny (+ 0.15)}          & 166.56G  \\
\bottomrule
\end{tabular}
\end{table}

\subsection{Experiments for RL}
First, to further illustrate the results of the experiments in the Ant environment using PPO, introduced in Section~\ref{subsec:ppo_ant}, we plot the reward curves across the 500 training epochs, as depicted in Figure~\ref{fig:ppo_ant}. For clarity and to facilitate easy comparison, we select the commonly used ReLU function and the MIMO activation function WTA for plotting.

Then, we conduct additional tests to ensure a fair computational cost comparison. We increase the size of the MLP with ReLU, Leaky ReLU, WTA and PReLU to a 4-layer structure with 256, 128, and 96 hidden units. This adjustment ensures that the MACS exceed those of the MLP using the proposed MPU, as summarized in Table~\ref{tab:ppo_ant_same_macs}. Despite achieving higher MACS, these activation functions fail to match the performance of the proposed MPU. Notably, an enlarged MLP not only incurs greater MACS but also increases the parameter count, leading to higher computational and memory costs during training and model storage. Consequently, the proposed MPU demonstrates superior efficiency in terms of both computational and memory requirements.

\begin{figure}[!htbp]
    \centering 
    \includegraphics[width=0.7\textwidth]{./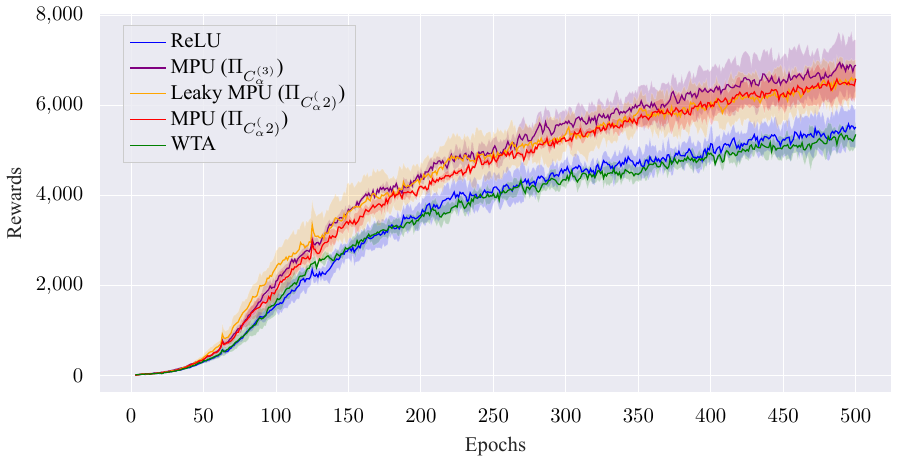}
    \caption{Reward curve of the Ant environment with PPO.}
    \label{fig:ppo_ant}
\end{figure}

\begin{table*}[h]
\centering
\small
\renewcommand{\arraystretch}{1.0}
\renewcommand\tabcolsep{5pt}
\caption{Rewards of the Ant environment with PPO. We highlight the best and the second best in bold and with underlining, respectively.}
\label{tab:ppo_ant_same_macs}
\begin{tabular}{cc|ccc}
\toprule
{Network}                       & {Activation}                          & Reward (3 seeds)                       & Reward (Max)   & Mean MACS     \\ \midrule
\multirow{7}{*}{MLP with 3 hidden layers}       &  ReLU                                 & 5521.15 $\pm$ 504.69       {\tiny ~~~~~~~~~~~~~~~}   & 5899.43  {\tiny ~~~~~~~~~~~~~~~~~}           & 232.78M   \\
                                &  Leaky ReLU                           & 5616.58 $\pm$ 445.85      {\tiny (+ 95.43)}        & 6035.25     {\tiny (+ 135.82)}            & 234.62M   \\
                                &  PReLU                                & 5878.36 $\pm$ 519.23      {\tiny (+ 357.21)}        & 6250.98     {\tiny (+ 351.55)}            & 232.78M   \\
                                & WTA & 5358.84 $\pm$ 356.35 {\tiny ( - 162.31)} & 5576.70 {\tiny (- 322.73)} & 234.62M  \\
                                &  \textbf{MPU ($\Pi_{C_{\alpha}^{(2)}}$)}       & 6590.71 $\pm$ 457.50 {\tiny (+ 1069.56)} & 7079.79   {\tiny (+ 1180.36)}     & 242.88M   \\
                                &  \textbf{Leaky MPU ($\Pi_{C_{\alpha}^{(2)}}$)} & \underline{6590.79 $\pm$ 521.43}  {\tiny (+ 1069.64)}   & \underline{7115.19}   {\tiny (+ 1215.76)}     & 246.55M   \\ 
                                &  \textbf{MPU ($\Pi_{C_{\alpha}^{(3)}}$)} & \textbf{6898.66 $\pm$ 685.50}  {\tiny (+ 1377.51)}   & \textbf{7650.75}   {\tiny (+ 1751.32)}     & 241.44M   \\ 
    \midrule
\multirow{4}{*}{MLP+ with 3 hidden layers}       &  ReLU                                 & 5581.78 $\pm$ 137.74       {\tiny (+ 60.63)}   & 5716.94  {\tiny (- 182.49)}           & 250.61M   \\
                                &  Leaky ReLU                           & 5679.84 $\pm$ 8534.00      {\tiny (+ 158.69)}        & 6338.67     {\tiny (+ 439.24)}            & 252.58M   \\
                                &  PReLU                                & 4962.42 $\pm$ 2456.28      {\tiny (- 558.73)}        & 6774.55     {\tiny (+ 875.12)}            & 250.61M   \\
                                & WTA & 5429.31 $\pm$ 552.92 {\tiny (- 91.84)} & 5576.7 {\tiny (- 322.73)} & 252.58M  \\
\bottomrule
\end{tabular}
\end{table*}

\section{Discussions on Choice of Cones}\label{append:m_choice}
The structure of the convex cone can be determined by the following three factors:
\begin{itemize}
    \item The vertex and axis of the cone;
    \item The dimension of the cone $m$;
    \item The half-apex angle $\alpha$.
\end{itemize}
And we shall discuss the choice of the three factors above in the following. First, the different choices of the vertex can be simply achieved by shifting the cone, which can be accomplished by the linear unit after the activation function. Thus, we fix the vertex of the cone as the origin of the $\mathbb{R}^m$ space without loss of generality. Similar idea can also be seen in most activation function design, such as the celebrated ReLU, sigmoid, tanh and so on. Moreover, we choose the axis of the cone as the line that passes through the point $(1, 1, \cdots, 1)$, for the symmetry of all input channels. The axis of the cone can also be rotated by the linear unit after the activation function.

We then discuss the impact of the dimension $m$ of the cone on the proposed MPU.

First, we show the following theoretical result on the expressive capability for different choice of $m$:
\begin{proposition}
    For $m_1<m_2$, the single FNN layer equipped with the projection function $\Pi_{C_\alpha^{(m_1)}}$ can be be represented by that with the projection function $\Pi_{C_\alpha^{(m_2)}}$.
\end{proposition}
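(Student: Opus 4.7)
The approach is to build a linear isometric embedding $M : \mathbb{R}^{m_1} \to \mathbb{R}^{m_2}$ that aligns the two cone axes, verify that $M$ intertwines the two cone projections, and then absorb $M$ and $M^\top$ into the linear weights of the FNN.

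First I will construct $M$. Since $m_2 > m_1$, the hyperplane $\boldsymbol{1}_{m_2}^\perp \subset \mathbb{R}^{m_2}$ has dimension $m_2 - 1 \geq m_1 - 1$, so it admits $m_1-1$ orthonormal vectors $v_2, \ldots, v_{m_1}$. Extend $\boldsymbol{1}_{m_1}/\sqrt{m_1}$ to an orthonormal basis $\{\boldsymbol{1}_{m_1}/\sqrt{m_1}, u_2, \ldots, u_{m_1}\}$ of $\mathbb{R}^{m_1}$, and define $M$ by sending $\boldsymbol{1}_{m_1}/\sqrt{m_1} \mapsto \boldsymbol{1}_{m_2}/\sqrt{m_2}$ and $u_i \mapsto v_i$. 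Then $M^\top M = I_{m_1}$ and $M$ sends the axis of $C_\alpha^{(m_1)}$ to the axis of $C_\alpha^{(m_2)}$.

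Next I will prove the equivariance $\Pi_{C_\alpha^{(m_2)}}(M x) = M \, \Pi_{C_\alpha^{(m_1)}}(x)$ for every $x \in \mathbb{R}^{m_1}$. Writing $(t, h)$ for the axis scalar and orthogonal component of $x$ and $(t', h')$ for those of $Mx$ as in Theorem~\ref{thm:cone_projection}, the isometry together with the axis condition gives $t' = \langle M x, M(\boldsymbol{1}_{m_1}/\sqrt{m_1}) \rangle = t$ and $h' = M x - t' \boldsymbol{1}_{m_2}/\sqrt{m_2} = M h$, so $\|h'\|_2 = \|h\|_2$. Each of the three branches of the piecewise formula of Theorem~\ref{thm:cone_projection} depends only on the tuple $(t, \|h\|_2, \alpha)$ together with the unit vector $h/\|h\|_2$, all of which transform compatibly under $M$. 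The membership and zero branches are immediate, and in the slant-face branch the scalar $s$ is unchanged while $\boldsymbol{1}_{m_2}/\sqrt{m_2} + \tan(\alpha) h'/\|h'\|_2 = M\bigl(\boldsymbol{1}_{m_1}/\sqrt{m_1} + \tan(\alpha) h/\|h\|_2\bigr)$, closing the identity.

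Finally I will absorb $M$ into the FNN weights. Let $\bar M$ be the block-diagonal matrix obtained by repeating $M$ on each block of $m_1$ coordinates; then $\bar M^\top \bar M = I$ and the equivariance lifts blockwise to $\Pi_{C_\alpha^{(m_2)}}(\bar M z) = \bar M \, \Pi_{C_\alpha^{(m_1)}}(z)$. Starting from a shallow FNN $W^{(2)} \Pi_{C_\alpha^{(m_1)}}(W^{(1)} x + b^{(1)}) + b^{(2)}$ in the sense of Theorem~\ref{thm:representation}, substitute $\Pi_{C_\alpha^{(m_1)}} = \bar M^\top \Pi_{C_\alpha^{(m_2)}} \bar M$ and fold $\bar M$ into $W^{(1)}, b^{(1)}$ and $\bar M^\top$ into $W^{(2)}$; this gives an equivalent network activated by $\Pi_{C_\alpha^{(m_2)}}$, with hidden width enlarged from $d_1$ to $(m_2/m_1) d_1$, in line with the width-blowup remark after Theorem~\ref{thm:representation}. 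The main obstacle is the case analysis establishing the equivariance, since one must confirm that every branch of Theorem~\ref{thm:cone_projection} behaves correctly under the embedding; once this is in hand, construction of $M$ and the weight reparameterization are routine.
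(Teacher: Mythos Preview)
Your argument is correct and is essentially the approach the paper has in mind: the paper omits the proof, pointing to Theorem~\ref{thm:representation}, whose first part embeds $\mathbb{R}$ into $\mathbb{R}^m$ along the cone axis via $x\mapsto x\mathbf{1}_m$; your isometric embedding $M$ sending $\mathbf{1}_{m_1}/\sqrt{m_1}\mapsto\mathbf{1}_{m_2}/\sqrt{m_2}$ is the natural generalization of that idea to $m_1>1$, and your equivariance check and absorption of $M,\bar M^\top$ into the linear weights make explicit what the paper leaves implicit. The resulting hidden-width blowup $d_1\mapsto (m_2/m_1)d_1$ also matches the remark after Theorem~\ref{thm:representation} specialized to $m_1=1$.
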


The proof of the proposition is almost the same as that of Theorem~\ref{thm:representation} and is omitted here for simplicity.

Therefore, as we increase the dimension of the cone $m$, the expressive power of the single cone increases.

However, when embedding the proposed MPU in an existing architecture without changing the width of each layer, the dimension of the cone $m$ impacts the following three aspects:
\begin{itemize}
    \item The expressive capability of each single cone (increases as $m$ increases);
    \item The number of cones in a single layer (decreases as $m$ increases);
    \item Computational complexity (increases as $m$ increases).
\end{itemize}

Therefore, there naturally raises a tradeoff among the three aspects mentioned above. And the practical choice of $m$ should be determined by the empirical performances of the experiments. For the ResNet18 case, we perform experiments on multiple choices of $m$, and the results are summarized in Table~\ref{tab:resnet18_diff_m}. We can summarize from the results that the best choice of $m$ is $m=2$ for ResNet18 on CIFAR10.

\begin{table}[!htbp]
\vspace{0.1in}
    \caption{Test accuracy of ResNet18 on CIFAR10 for cones with different dimensions $m$.}
    \label{tab:resnet18_diff_m}
    \centering
    \small
    \renewcommand{\arraystretch}{1.05}
    \renewcommand\tabcolsep{10pt}
    \begin{tabular}{ccc}
    \toprule
    {Activation} & Test Accuracy (3 seeds) & Test Accuracy (Max)    \\ \midrule
    ReLU                                 & 95.35 $\pm$ 0.10       {\tiny ~~~~~~~~~~~~}   & 95.47  {\tiny ~~~~~~~~~~~~}   \\
    Leaky ReLU                           & 95.48 $\pm$ 0.04      {\tiny (+ 0.13)}        & 95.53     {\tiny (+ 0.06)}  \\
    \textbf{2-dimensional MPU ($\Pi_{C_{\alpha}^{(2)}}$)}       & \underline{95.51 $\pm$ 0.10} {\tiny (+ 0.16)} & \textbf{95.60}   {\tiny (+ 0.13)}  \\
    \textbf{2-dimensional Leaky MPU ($\Pi_{C_{\alpha}^{(2)}}$)} & \textbf{95.53 $\pm$ 0.03}  {\tiny (+ 0.18)}   & \underline{95.56}   {\tiny (+ 0.09)}   \\ 
    \textbf{3-dimensional MPU ($\Pi_{C_{\alpha}^{(3)}}$)} & 95.37 $\pm$ 0.21  {\tiny (+ 0.02)}   & 95.53   {\tiny (+ 0.06)}  \\ 
    \textbf{4-dimensional MPU ($\Pi_{C_{\alpha}^{(4)}}$)} & 94.78 $\pm$ 0.08 {\tiny (- 0.57)} & 94.85 {\tiny (- 0.62)}   \\
    \textbf{5-dimensional MPU ($\Pi_{C_{\alpha}^{(5)}}$)} & 94.81 $\pm$ 0.16 {\tiny (- 0.54)} & 94.92 {\tiny (- 0.55)}   \\
    \textbf{6-dimensional MPU ($\Pi_{C_{\alpha}^{(6)}}$)} & 94.40 $\pm$ 0.18 {\tiny (- 0.95)} & 94.57 {\tiny (- 0.90)}   \\ \bottomrule
    \end{tabular}
\end{table}

Finally, we set the half-apex angle $\alpha$ as a learnable parameter and is kept the same for each layer. Thus, by optimizing this parameter $\alpha$ via the training data, this parameter determines the compression ratio of each layer.

\section{Computational Complexity}\label{append:complexity}
According to the explicit calculation of the MPU in Theorem~\ref{thm:cone_projection}, we summarize the worst-case computational complexity for a single $m$-dimensional MPU in Table~\ref{tab:complexity}. The result shows that the overall computational complexity of the general $m$-dimensional MPU grows linearly w.r.t. its dimension $m$. Moreover, the computation of the MPU can be simplified for $m=2$, as the projection is now to a polyhedral instead of a cone. The simplified computational complexity is also shown in Table~\ref{tab:complexity}. Note that in practice, we replace $m$ single-input single-output activation functions with a single MPU, so that the computational complexity in Table~\ref{tab:complexity} should be compared to $m$ single-input single-output activation functions. Moreover, the element-wise computation, such as plus, minus and multiplication operations can be easily parallelized on GPU.

\begin{table}[!htbp]
    \centering
    \caption{Worst-case computational complexity of each operations for the MPU and its relationship with the dimension $m$ of the MPU.}
    \begin{tabular}{lll}
        \toprule
    Operation       & Frequency & Frequency ($m=2$)      \\
    \midrule
    $+$ & $3m-1$ & $2m=4$ \\
    $-$ & $m$ & $m=2$ \\
    $\times$ & $m+4$ & $4$ \\
    $\div$ & $m+3$ & $m+3=5$ \\
    $\sqrt{\quad}$ & $1$ & $0$ \\
    Compare (ReLU) & $2$ & $3$ \\
    \textbf{Overall} & $\boldsymbol{7m+9}$ & $\boldsymbol{4m+10=18}$ \\
    \bottomrule
    \end{tabular}
    \label{tab:complexity}
\end{table}

\section{Related Works}\label{sec:related_works}

The most used activation function is ReLU \citep{nairRectifiedLinearUnits2010, agarapDeepLearningUsing2019}. It largely mitigates the ``gradient vanishing'' problem of previously used sigmoid or tanh units \citep{maasRectifierNonlinearitiesImprove2013}. This issue occurs as the gradients approach 0 when the sigmoid or tanh is saturated. 
Recently, various activation functions have been proposed. 

These methods can be roughly categorized into four classes: 
\begin{itemize}[leftmargin=*]
    \item \textit{Univariate fixed activations:} LeakyReLU \citep{maasRectifierNonlinearitiesImprove2013} proposes to allow for a small, non-zero gradient when the unit is not active, and \citet{sitzmannImplicitNeuralRepresentations2020a} introduce periodic activation functions to improve implicit neural representations. \citet{dauphinLanguageModelingGated2017} propose to use Gated Linear Units (GLU) and Gated Tanh Units (GTU) to improve language modeling, SiLU \citep{elfwingSigmoidWeightedLinearUnits2017a} propose to use sigmoid function weighted by its input, while Exponential Linear Unit (ELU) \citep{clevertFastAccurateDeep2016} keeps the positive arguments but set constant values for negative ones,
    and GELU \citep{hendrycksGaussianErrorLinear2023} mitigates overfitting by introducing stochastic regularizers.
    \item \textit{Univariate trainable activations:} For example, PReLU \citep{heDelvingDeepRectifiers2015} explores to learn the parameters of the rectifiers actively, Swith \citep{ramachandranSearchingActivationFunctions2017} searches for candidate activations, and Kernel-based Activation Function (KAF) \citep{scardapaneKafnetsKernelbasedNonparametric2019} that uses an ensembled kernel function. Other variations of ReLU include PELU \citep{trottierParametricExponentialLinear2018} and FReLU \citep{qiuFReLUFlexibleRectified2018}. Though the above and most existing activations are SISO, some multivariate activations have been proposed.
    \item \textit{Multivariate fixed activations:} 
    Concatenated ReLU (CReLU \citep{shang2016understanding}) takes the concatenation of two ReLU functions as the output, which is single-input multi-output (SIMO). MaxOut \citep{goodfellowMaxoutNetworks2013} and its variant Probabilistic MaxOut \citep{springenbergImprovingDeepNeural2014} takes the maximum of several linear functions as the output, which is multi-input single-output (MISO). Local Winner-Take-All subnetwork (LWTA)~\citep{srivastavaCompeteCompute2013} and top-$k$ Winner-Takes-All (WTA)~\citep{kwta} incorporate competitions to enhance multivariate activations. Other nonlinear layers such as softmax and batch norm \citep{ioffeBatchNormalizationAccelerating2015a} are also considered multivariate nonlinearities. Moreover, complex-valued activations also serve as multivariate activations \citep{basseySurveyComplexValuedNeural2021}, e.g., multi-valued neuron (MVN) \citep{aizenbergMultivaluedThresholdFunctions1971} and  cardioid activations \citep{virtueBetterRealComplexvalued2017}.
    \item \textit{Multivariate trainable activations:} 
    Network In Network (NIN) \citep{lin2013network}, that compute more abstract features for local patches in each convolutional layer, and its variant Convolution in Convolution (CIC) \citep{pangConvolutionConvolutionNetwork2018}, which are multi-input multi-output (MIMO).
\end{itemize}

In this work, MPU is also a trainable activation that generalizes ReLU to MIMO.

\end{document}